\documentclass{article}

\PassOptionsToPackage{numbers, compress}{natbib}
\usepackage[preprint]{neurips_2026}

\usepackage[utf8]{inputenc} 
\usepackage[T1]{fontenc}    
\usepackage{hyperref}       
\usepackage{url}            
\usepackage{booktabs}       
\usepackage{amsfonts}       
\usepackage{nicefrac}       
\usepackage{microtype}      
\usepackage[table]{xcolor}  
\usepackage{multirow}
\usepackage{subfig}
\usepackage{float}
\usepackage{overpic}
\usepackage{lineno}
\usepackage{amsmath}
\usepackage{bm}
\usepackage{algorithm2e}
\usepackage[most]{tcolorbox}
\usepackage{listings}
\usepackage{courier}
\usepackage[toc,page,title]{appendix}
\usepackage{natbib}
\usepackage{pifont}

\definecolor{red}{rgb}{1, 0, 0}
\definecolor{darkgreen}{rgb}{0, 0.5, 0}
\definecolor{highlightgreen}{rgb}{0.3, 0.6, 0}
\hypersetup{colorlinks=true, citecolor=darkgreen, linkcolor=red, urlcolor=red}
\definecolor{colthetapop}{HTML}{017100} 
\SetKwComment{Comment}{/* }{ */}
\RestyleAlgo{ruled}

\newtheorem{theorem}{Theorem}
\newtheorem{assumption}{Assumption}

\title{Addressing Performance Saturation for LLM RL via Precise Entropy Curve Control}

\author{%
  Bolian Li, Yifan Wang, Yi Ding, Anamika Lochab, Ananth Grama, Ruqi Zhang \\
  Purdue University, West Lafayette, IN, USA \\
  Correspondence to: \texttt{li4468@purdue.edu}, \texttt{ruqiz@purdue.edu} \\
}

\begin{document}

\maketitle

\begin{abstract}
Reinforcement learning (RL) has enabled complex reasoning abilities in large language models (LLMs). However, most RL algorithms suffer from performance saturation, preventing continued gains as RL training scales. This problem can be characterized by the collapse of entropy, a key diagnostic for exploration in RL. Existing attempts focus on preventing entropy collapse through regularization or clipping. However, their resulting entropy curves often exhibit instability in the long term, which hinders performance gains. In this paper, we introduce Entrocraft, a simple rejection-sampling approach that realizes \emph{user-customized entropy schedule} by biasing the advantage distributions. Entrocraft requires no objective regularization and is advantage-estimator-agnostic. Theoretically, we relate per-step entropy change to the advantage distribution under minimal assumptions. This explains the behavior of existing RL and entropy-preserving methods. Entrocraft also enables a systematic study of entropy schedules, which reveals that linear annealing, which starts high and decays to a slightly lower target, performs best. Empirically, Entrocraft addresses performance saturation, significantly improving generalization, output diversity, and long-term training. It enables a 4B model to outperform an 8B baseline, sustains improvement for up to 4× longer before plateauing, and raises pass@K by 50\% over the baseline.\footnote{The code is available at \url{https://github.com/lblaoke/entrocraft}.}\footnote{We also provide an interactive demo for playing with entropy curve control at \url{https://lblaoke.github.io/demo/entrocraft}.}
\end{abstract}

\section{Introduction}
Reinforcement learning (RL) has become the dominant approach for aligning with human preference and realizing multi-step reasoning ability in large language models (LLMs)~\citep{schulman2017proximal, bai2022training, ouyang2022training}. Despite these successes, many RL algorithms still underperform anticipated performance limits: as training scales, performance saturates earlier than expected, leaving additional data and compute unable to translate into further improvements~\citep{hou2024does, parkhorizon, beukman2026preventing}. A core reason behind this saturation is the collapse of the exploration–exploitation balance, where the LLM over-commits to a narrow region of solutions and stops exploring alternative reasoning trajectories~\citep{cui2025entropy, yuedoes, li2026back}. Empirically, this phenomenon is well captured by entropy dynamics: the frequently observed entropy collapse corresponds to a shrinking exploration ability during RL.

Recent efforts have resulted in several entropy-preserving techniques to prevent entropy drop during RL. These techniques are based on loss regularization~\citep{williams1991function}, clipping~\citep{yu2025dapo, cui2025entropy, wang2026entropy}, or positive-negative decoupling~\citep{zhusurprising, yang2025entropic}. While effectively increasing entropy, the entropy curves during RL training are still \emph{coarsely} controlled. Entropy can drift too high after a few steps, which in turn makes RL unstable and thus hinders sustained performance gains. Besides, they typically control entropy indirectly through the loss or update rule, making it difficult to prescribe an explicit entropy schedule over long training horizons. These drawbacks is particularly severe in long-term RL training. 

To address this, we propose Entrocraft, a method for precise control over the entropy curve that allows entropy schedules to be user-customized. Fig.~\ref{fig:overview} summarizes our method, the entropy curve control, and empirical improvements. We begin with an LLM-oriented theoretical analysis of entropy change based on realistic policy assumptions. We highlight that entropy changes are negatively related to the advantage, and high model confidence amplifies such entropy changes. 

Based on the theoretical results, we design a simple rejection sampling to filter out positive/negative-advantage rollout samples when entropy is lower/higher than a threshold, biasing the advantage distribution towards the entropy-increasing/decreasing region. Since rejection sampling directly modifies the advantage distribution, it is able to move the entropy to target values within very few steps, enabling the accurate crafting of entropy curves. The method requires no entropy regularization and applies as a drop-in to existing RL algorithms. 

Precise control opens a question that the field has not yet been able to ask experimentally: \emph{what entropy schedule is the best?} Comparing across schedule families, we find that a simple linear annealing schedule performs best.

The main contributions of this paper can be summarized below:
\begin{itemize}
    \item We provide rigorous theoretical results on entropy changes grounded in realistic LLM-based policy assumptions. Entropy changes are negatively related to the advantages and high model confidence amplifies such changes.
    \item We introduce a lightweight controller based on rejection sampling for entropy schedules in LLM RL. Unlike entropy regularization, clipping, or decoupling methods, Entrocraft does not modify the RL objective, and is advantage-estimator-agnostic. Entrocraft can craft the entropy curve to be user-specified entropy schedules, which is the key to addressing performance saturation.
    
    \item Extensive experiments demonstrate the effectiveness of Entrocraft. It significantly improves generalization (a 4B model surpasses an 8B baseline), increases output diversity (AIME-25 pass@32 is 50\% higher than baseline), and extends the training horizon (sustaining improvement for up to 4× longer before plateauing as training scales).
\end{itemize}

\begin{figure}[t]\vspace{-40pt}
    \centering
    \includegraphics[width=\linewidth]{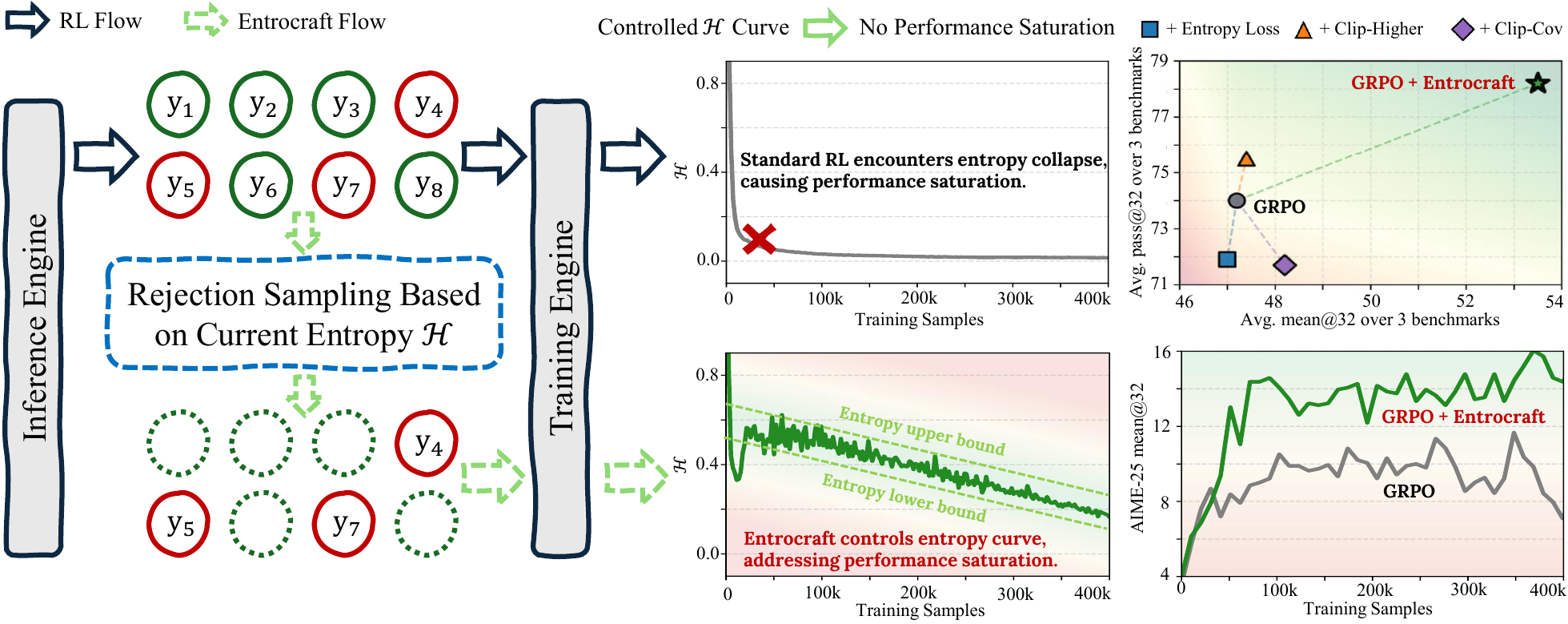}
    \caption{Overview of Entrocraft. It uses entropy-guided rejection sampling to filter rollouts against a target entropy, enabling precise control over the entropy curve throughout RL training. This control addresses performance saturation, a key obstacle to scaling RL. We find that a linear annealing schedule performs best, improving generalization, output diversity, and sustained training.}
    \label{fig:overview}
\end{figure}

\section{Preliminaries}
\subsection{Reinforcement Learning for LLMs}

In a standard policy-gradient RL framework like Group Relative Policy Optimization (GRPO)~\citep{shao2024deepseekmath} or Group Sequence Policy Optimization (GSPO)~\citep{zheng2025group}, the language model (or actor) we aim to train is denoted as a $\bm{\theta}$-parameterized distribution (or policy) $\pi_{\bm{\theta}}$. The direct output of language models is a softmax distribution over the entire vocabulary $\mathbb{V}$, interpreted as next-token probabilities: $\bm{p}_t = \pi_{\bm{\theta}}(\cdot|x,y_{<t})$. Each new token $y_t$ is drawn from $\bm{p}_t$.

Each RL step consists of rollout generation, advantage estimation, and policy update, allowing the model to explore different potential answers and learn from environment feedback. For a single question (or prompt) $x$, rollout generation samples a set of $G$ responses $\{y_i\}_{i=1}^G$ from an old checkpoint $\pi_{\bm{\theta}_{\text{sampler}}}(\cdot|x)$. The following PPO-style objective is used by many recent RL algorithms:
\begin{equation}\small
    \mathcal{J}(\bm{\theta}) = \frac{1}{G}\sum_{i=1}^G\sum_{t=1}^{|y_i|}\min\left[r_I(t)\cdot\hat{A}(x,y_i), \text{clip}\left(r_I(t),1-\epsilon_{\text{low}},1+\epsilon_{\text{high}}\right)\hat{A}(x,y_i)\right],
    \label{eq:grpo}
\end{equation}
where $r_I(t) = \frac{\pi_{\bm{\theta}}(y_{i,t}|x,y_{i,<t})}{\pi_{\bm{\theta}_{\text{sampler}}}(y_{i,t}|x,y_{i,<t})}$ is the importance sampling ratio, and $\hat{A}$ is the estimated advantage.

Our theoretical analysis is based on a simplified policy-gradient objective that does not consider clipping or importance sampling: $\mathcal{J}(\bm{\theta}) = \mathbb{E}_{y\sim\pi_{\bm{\theta}}(\cdot|x)}\hat{A}(x,y)$, and thus the per-step policy update is:
\begin{equation}
    \Delta\bm{\theta} = \eta \cdot \nabla_{\bm{\theta}}\mathcal{J}(\bm{\theta}) = \eta \cdot \mathbb{E}_{y\sim\pi_{\bm{\theta}}(\cdot|x)}[\hat{A}(x,y) \cdot \nabla_{\bm{\theta}}\log\pi_{\bm{\theta}}(y|x)],
    \label{eq:single_update}
\end{equation}
where $\eta$ is the learning rate.

\subsection{Entropy of LLMs}
The predictive entropy of LLMs provides a principled measurement of model uncertainty and serves as an indicator of response diversity and exploration capability. For a single question $x$ and answer $y$, the aggregated entropy is computed as: $\mathcal{H} = -\frac{1}{|y|}\sum_{t=1}^{|y|}\sum_{i=1}^{|\mathbb{V}|}p_{t,i}\log p_{t,i}$. The expected entropy, averaged over all prompts in a batch and their corresponding rollout samples, serves as an indicator of how LLMs' exploration capability evolves during RL. This evolution is known as entropy dynamics~\citep{renlearning, wang2026entropy}. In this paper, we primarily study entropy change during RL updates:
\begin{equation}
    \Delta\mathcal{H} = \mathcal{H}(\bm{p} + \delta\bm{p}) - \mathcal{H}(\bm{p}),
    \label{eq:entropy}
\end{equation}
to enable accurate and per-step entropy control.

\section{Theoretical Analysis: How Entropy Evolves during LLM RL}
This section presents theoretical results on entropy changes during RL training. We use these results to interpret the entropy dynamics of existing RL algorithms, particularly in long-running scenarios. Our analysis extends prior work~\citep{skydownacai2025rl, cui2025entropy, yang2025entropic, wang2026entropy, shen2025entropy} to a more realistic setting that does not require the actor to follow a tabular softmax policy\footnote{The tabular softmax policy assumes $\bm{\theta}=\bm{z}$, where logits are model parameters. However, in realistic LLM settings, the logits $\bm{z}$ are the functions of model parameters $\bm{\theta}$, and even a simple MLP module would make this assumption invalid.}. The resulting bounds are direct and easy to interpret, avoiding the complicated covariance and expectation terms that appear in prior analyses~\citep{skydownacai2025rl, wang2026entropy}.

\subsection{From Advantages to Entropy Changes}\label{sec:theory}
The analysis begins with two fundamental questions: (\romannumeral1) What is the sign of entropy change $\Delta\mathcal{H}$, and (\romannumeral2) what is its magnitude? These questions help us predict entropy change at each RL step, revealing how advantages affect entropy dynamics. To obtain exact analytical results, we make minimal assumptions about the actor policy and advantage distribution, only requiring that the learning rate is sufficiently low, as stated in Assumption~\ref{assumption}.

\begin{assumption}[Proximity of Policy Updates]
\label{assumption}
We assume the learning rate $\eta$ in Eq.~\eqref{eq:single_update} is small enough that the Taylor expansion approximation of policy probability updates holds (i.e., $\|\delta\bm{p}\|_2^2 \ll \|\delta\bm{p}\|_1$). This is a standard assumption in continuous optimization, and is satisfied in practice by modern adaptive optimizers like Adam~\citep{kingma2014adam} with typical learning rates (e.g., $10^{-6}\leq\eta\leq10^{-4}$).
\end{assumption}

\begin{theorem}[Token-Level Entropy Change in LLMs]
\label{theorem:token_entropy}
Consider a single policy-gradient update step of the form in Eq.~\eqref{eq:single_update}. Let $p_k$ be the probability that token $k$ is sampled during rollout generation. Then the sign entropy change $\Delta\mathcal{H}$ (Eq.~\eqref{eq:entropy}) triggered by token $k$ is opposite to that of its estimated advantage $\hat{A}_k$:
$$
\hat{A}_k \cdot \Delta\mathcal{H} \leq 0,~~\text{whenever}~~p_k > \prod_{i\in\mathbb{V}\backslash\{k\}} p_i^{-\frac{\delta p_i}{\delta p_k}},
$$
where $\delta p_i$ is the probability change at this RL step.
\end{theorem}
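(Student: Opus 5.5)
Under Assumption~\ref{assumption} we linearize the entropy change: $\Delta\mathcal{H} \approx \nabla_{\bm p}\mathcal{H}(\bm p)^\top \delta\bm p = -\sum_{i\in\mathbb{V}} (\log p_i + 1)\,\delta p_i$. Probabilities remain normalized, so $\sum_i \delta p_i = 0$ and the constant term drops. This leaves $\Delta\mathcal{H} \approx -\sum_i \delta p_i \log p_i$, and the second-order remainder is controlled by $\|\delta\bm p\|_2^2 \ll \|\delta\bm p\|_1$. Note that we never need $\delta\bm p$ in terms of logits or $\bm\theta$. This is exactly what lets us avoid the tabular softmax assumption: the architecture only enters through the unknown ratios $\delta p_i/\delta p_k$, which the statement keeps explicit.

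The first step is to fix the sign of $\delta p_k$. The update triggered by the sampled token $k$ is $\Delta\bm\theta = \eta \hat A_k \nabla_{\bm\theta}\log p_k$. To first order, $\delta p_k = \nabla_{\bm\theta} p_k^\top \Delta\bm\theta = \eta \hat A_k\, p_k \|\nabla_{\bm\theta}\log p_k\|_2^2$. Hence $\operatorname{sign}(\delta p_k) = \operatorname{sign}(\hat A_k)$, and $\hat A_k\,\delta p_k \ge 0$, with equality only in the degenerate case where the gradient vanishes. This holds for any differentiable parameterization, which is the key point.

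The second step factors out $\delta p_k$:
$$\Delta\mathcal{H} \approx -\delta p_k\Big(\log p_k + \sum_{i\neq k}\tfrac{\delta p_i}{\delta p_k}\log p_i\Big) = -\delta p_k \log\Big(p_k \prod_{i\neq k} p_i^{\frac{\delta p_i}{\delta p_k}}\Big).$$
The hypothesis $p_k > \prod_{i\neq k} p_i^{-\delta p_i/\delta p_k}$ is equivalent to the argument of the log exceeding $1$. So the parenthesized log is positive, and $\operatorname{sign}(\Delta\mathcal{H}) = -\operatorname{sign}(\delta p_k) = -\operatorname{sign}(\hat A_k)$. This gives $\hat A_k\Delta\mathcal{H} \le 0$. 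The case $\delta p_k = 0$ is trivial, since then $\hat A_k = 0$ or the update is null.

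The main obstacle is making the first-order steps rigorous rather than heuristic, for two reasons. First, the quadratic remainder in both expansions must not flip the sign near the boundary of the condition. Second, $\log p_i$ is unbounded for tiny $p_i$, so the entropy Hessian $-\operatorname{diag}(1/p_i)$ may be large. I would handle this by stating the result under Assumption~\ref{assumption} as a first-order statement, and bounding the remainder by $\tfrac12\sum_i \delta p_i^2/p_i$, which requires $\delta p_i$ small relative to $p_i$. I would also state the conclusion for the strict inequality in the hypothesis, where a small enough $\eta$ makes the linear term dominate.
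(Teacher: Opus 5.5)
Your proposal is correct and follows the same route as the paper's proof: you linearize $\mathcal{H}$, drop the constant term using $\sum_i \delta p_i = 0$, factor out $\delta p_k$, and read the hypothesis as positivity of $\log p_k + \sum_{i\neq k}\frac{\delta p_i}{\delta p_k}\log p_i$. Your chain-rule computation $\delta p_k \approx \eta\,\hat{A}_k\, p_k\,\|\nabla_{\bm{\theta}}\log p_k\|_2^2$ justifies $\mathrm{sign}(\delta p_k)=\mathrm{sign}(\hat{A}_k)$, which the paper only asserts, and treating both signs of $\hat{A}_k$ at once replaces the paper's ``symmetrically'' step and its loose ``with probability'' phrasing.
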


\begin{theorem}[Sequence-Level Entropy Change in LLMs]
\label{theorem:seq_entropy}
Consider a single policy-gradient update step of the form in Eq.~\eqref{eq:single_update}, and assume that all tokens share the same outcome reward. Let $p_{t,i} = \pi_{\bm{\theta}}(y_t=i|x,y_{<t})$ be the probability that $i$ is sampled as the $t$-th token in the sequence. The sign of the entropy change $\Delta\mathcal{H}$ (Eq.~\eqref{eq:entropy}) triggered by response $y$ is opposite to that of the estimated advantage $\hat{A}(x,y)$:
$$
\hat{A}(x,y) \cdot \Delta\mathcal{H} \leq 0,~~\text{whenever}~~\pi_{\bm{\theta}}(y|x) \geq \prod_{t=1}^{|y|}\prod_{i\in\mathbb{V}\backslash\{y_t\}} p_{t,i}^{-\frac{\delta p_{t,i}}{\delta p_{t,y_t}}},
$$
where $\delta p_{t,i}$ is the probability change at this RL step.
\end{theorem}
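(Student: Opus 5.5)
We plan to reduce Theorem~\ref{theorem:seq_entropy} to a per-position application of the argument behind Theorem~\ref{theorem:token_entropy}, and then aggregate over positions $t=1,\dots,|y|$. By Assumption~\ref{assumption} we linearize the entropy change at each position:
\[
\Delta\mathcal{H}_t \approx -\sum_{i\in\mathbb{V}} \delta p_{t,i}\,(\log p_{t,i}+1) = -\sum_{i} \delta p_{t,i}\log p_{t,i},
\]
where the second equality uses $\sum_i \delta p_{t,i}=0$. Splitting off the sampled token $y_t$ gives
\[
\Delta\mathcal{H}_t \approx -\delta p_{t,y_t}\Bigl[\log p_{t,y_t} + \sum_{i\neq y_t}\tfrac{\delta p_{t,i}}{\delta p_{t,y_t}}\log p_{t,i}\Bigr]
= -\delta p_{t,y_t}\log\Bigl(p_{t,y_t}\prod_{i\neq y_t}p_{t,i}^{\delta p_{t,i}/\delta p_{t,y_t}}\Bigr).
\]
This is exactly the quantity used in Theorem~\ref{theorem:token_entropy}.

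The next step is to fix the sign of $\delta p_{t,y_t}$. Under Eq.~\eqref{eq:single_update}, the response $y$ enters with weight $\hat{A}(x,y)\nabla_{\bm\theta}\log\pi_{\bm\theta}(y|x) = \hat{A}(x,y)\sum_t \nabla_{\bm\theta}\log p_{t,y_t}$. Since every token shares the same outcome reward, each position carries the same sign $\operatorname{sgn}\hat{A}(x,y)$. To first order, a gradient step along $\nabla\log p_{t,y_t}$ increases $p_{t,y_t}$, because $\langle\nabla p,\nabla\log p\rangle = \|\nabla p\|^2/p \ge 0$. We will therefore assert $\operatorname{sgn}\delta p_{t,y_t} = \operatorname{sgn}\hat{A}(x,y)$. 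This is the step we expect to be the main obstacle. With $\bm\theta$ shared across positions, the cross terms $\langle \nabla p_{t,y_t}, \nabla\log p_{s,y_s}\rangle$ for $s\neq t$ need not be nonnegative. We would first handle this the way the token-level theorem implicitly does: attribute to position $t$ only its own gradient contribution, the quantity ``triggered by'' that token, so that the cross terms are excluded by definition of the attribution. If that is not acceptable, we would state the result in terms of the realized signs of $\delta p_{t,y_t}$.

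Given that sign, each position satisfies
\[
\hat{A}\,\Delta\mathcal{H}_t \approx -|\hat{A}|\,|\delta p_{t,y_t}|\log Q_t, \qquad Q_t := p_{t,y_t}\prod_{i\neq y_t}p_{t,i}^{\delta p_{t,i}/\delta p_{t,y_t}}.
\]
The sequence-level entropy is the average $\frac{1}{|y|}\sum_t \Delta\mathcal{H}_t$. Multiplying the token probabilities gives $\prod_t Q_t = \pi_{\bm\theta}(y|x)\prod_t\prod_{i\neq y_t}p_{t,i}^{\delta p_{t,i}/\delta p_{t,y_t}}$, so the hypothesis of the theorem is equivalent to $\sum_t \log Q_t \ge 0$.

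To pass from $\sum_t\log Q_t\ge0$ to $\sum_t |\delta p_{t,y_t}|\log Q_t \ge 0$ we need a weighting argument. We would assume, or argue from the shared scalar advantage, that the magnitudes $|\delta p_{t,y_t}|$ are comparable across positions, so that the weights can be factored out. Failing that, we would present the condition as the natural sequence analogue of Theorem~\ref{theorem:token_entropy}, obtained by summing the per-token log-conditions. Either route concludes $\hat{A}(x,y)\,\Delta\mathcal{H}\le 0$.
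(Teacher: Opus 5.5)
Your decomposition is the paper's own: linearize each position, drop the $+1$ using $\sum_i\delta p_{t,i}=0$, split off $y_t$, take $\operatorname{sgn}\delta p_{t,y_t}=\operatorname{sgn}\hat A(x,y)$, and collect the logarithms into $\log\pi_{\bm\theta}(y|x)$. The genuine gap is the weighting step you flag at the end, and it cannot be closed. The sign of $\Delta\mathcal{H}$ is governed by $\sum_t|\delta p_{t,y_t}|\log Q_t$, while the hypothesis only controls $\sum_t\log Q_t$. With unequal weights and mixed-sign $\log Q_t$ these disagree, and this already happens under Eq.~\eqref{eq:single_update}.

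Take a tabular softmax over a binary vocabulary, with $|y|=2$, $\hat A>0$, $p_{1,y_1}=0.91$ and $p_{2,y_2}=0.1$. This example has no cross terms, so it isolates the weighting failure. The hypothesis holds, since $\pi_{\bm\theta}(y|x)=0.091\ge 0.09\cdot 0.9=0.081$. The gradient step gives $\delta p_{t,y_t}=2\eta\hat A\,p_{t,y_t}(1-p_{t,y_t})^2$, so $\delta p_{2,y_2}\approx 11\,\delta p_{1,y_1}$ and
\[
\Delta\mathcal{H}\approx\tfrac12\Bigl(\delta p_{2,y_2}\log 9-\delta p_{1,y_1}\log\tfrac{0.91}{0.09}\Bigr)>0,
\]
so $\hat A\,\Delta\mathcal{H}>0$. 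The statement as written is therefore false. ``Comparable'' magnitudes do not rescue it; only exactly equal $|\delta p_{t,y_t}|$, or all $\log Q_t$ of one sign, would. Calling the product condition the ``natural analogue'' is not a proof.

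The paper crosses this same step by defining $\overline{\delta p_y}=\sum_t\delta p_{t,y_t}\log Q_t\big/\sum_t\log Q_t$ and asserting $\overline{\delta p_y}>0$. That assertion is equivalent to the conclusion and is negative in the example above, so the missing idea is not there either. A provable version needs a stronger hypothesis. One option is the token-level condition $Q_t\ge 1$ at every position, which implies the product condition. Another is the weighted condition $\prod_t Q_t^{|\delta p_{t,y_t}|}\ge 1$.

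Your worry about $\operatorname{sgn}\delta p_{t,y_t}$ is also justified, and the paper leaves it unresolved, simply asserting $\min_t\delta p_{t,y_t}>0$ for $\hat A>0$. To first order, only the sequence log-likelihood is guaranteed to move with the advantage: $\delta\log\pi_{\bm\theta}(y|x)\approx\eta\hat A\,\|\nabla_{\bm\theta}\log\pi_{\bm\theta}(y|x)\|^2$. An individual $\delta p_{t,y_t}$ can take the other sign through the cross terms you identify. Your ``attribution'' fix does not work for the statement as written. The gradient of $\log p_{s,y_s}$ moves the distributions at every position $t$, so keeping only each position's own contribution replaces the $\Delta\mathcal{H}$ of Eq.~\eqref{eq:entropy} with a different quantity. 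The sign condition has to be stated as an explicit assumption, which is what the paper does implicitly.
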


We provide theoretical guarantees in Theorem \ref{theorem:token_entropy} and \ref{theorem:seq_entropy} for token-level entropy and sequence-level entropy\footnote{NOTE: These theoretical results are based on the entropy computed from the learner policy $\pi_{\bm{\theta}}$.} respectively, and outline their proofs in Appendix~\ref{appendix:proof}. Intuitively, both theorems state that entropy changes are negatively related to the advantage, provided the probability of rollout samples is high enough to be above a baseline constant:
\begin{equation}\small
    \text{Entropy Change} \propto -~  \text{Advantage} \times (\text{Log Likelihood of Rollout Sample} - \text{Output Space Baseline}),
\end{equation}

where the \emph{output space baseline} is: $-\sum_{t=1}^{|y|}\sum_{i\in\mathbb{V}\backslash\{y_t\}}\frac{\delta \pi_{\bm{\theta}}(y_t=i|x,y_{<t})}{\delta \pi_{\bm{\theta}}(y_t|x,y_{<t})}\log \pi_{\bm{\theta}}(y_t=i|x,y_{<t})$.

Theorem~\ref{theorem:seq_entropy} suggests that positive-advantage rollout samples lead to an entropy drop if the model confidence $\pi_{\bm{\theta}}(y|x)$ is above the output space baseline. We further give empirical evidence to support this condition in Fig.~\ref{fig:output_space_baseline}, where we compare the log likelihoods (confidence) and output space baselines of training \texttt{Qwen3-4B-Base} under positive (RAFT++~\citep{xiong2025minimalist}), zero-mean (GRPO~\citep{shao2024deepseekmath}), and negative (NSR~\citep{zhusurprising}) advantage estimators. Fig.~\ref{fig:output_space_baseline} shows that the log likelihood is significantly higher than the output space baseline in all cases, verifying the condition for $\hat{A}(x,y) \cdot \Delta\mathcal{H} \leq 0$ to hold.

Entropy collapse/explosion in RL is a predictable consequence of advantage-weighted updates. Our results show that positive-advantage updates tend to reduce entropy, while negative-advantage updates tend to increase it. As a result, entropy collapse becomes the default, once training is dominated by positive advantages. This explanation also justifies the “accuracy-entropy tradeoff”~\citep{cui2025entropy} in standard RL algorithms, where accuracy increase leads to negative entropy changes. However, the theoretical results also suggest that entropy changes are not directly related to the model performance. It is possible to maintain entropy while still improving rewards if the algorithms selectively choose which advantage regions contribute to the policy gradients.

\begin{figure}[t]\vspace{-40pt}
    \centering
    \subfloat[All advantage estimators lead to sufficiently high confidence\label{fig:output_space_baseline}]{\includegraphics[width=.5\columnwidth,height=4cm]{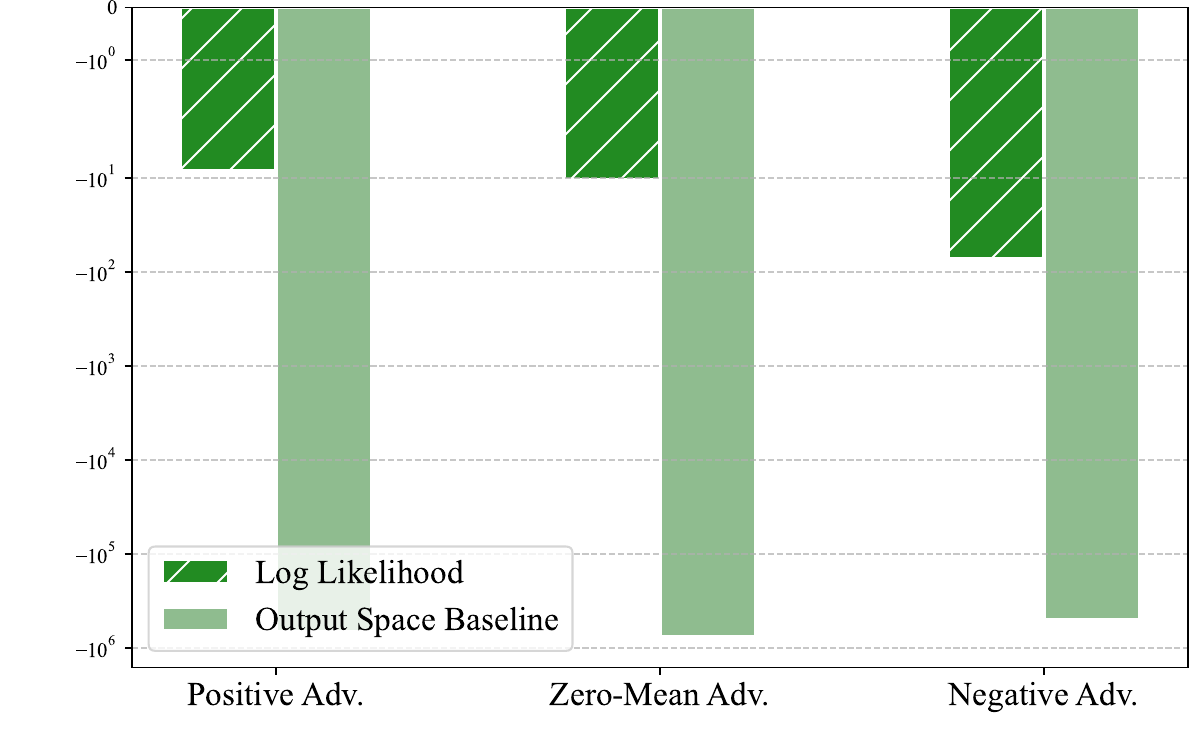}}
    \subfloat[Positive-sample confidence is consistently higher\label{fig:pos_ll_vs_neg_ll_gap}]{\includegraphics[width=.5\columnwidth,height=4cm]{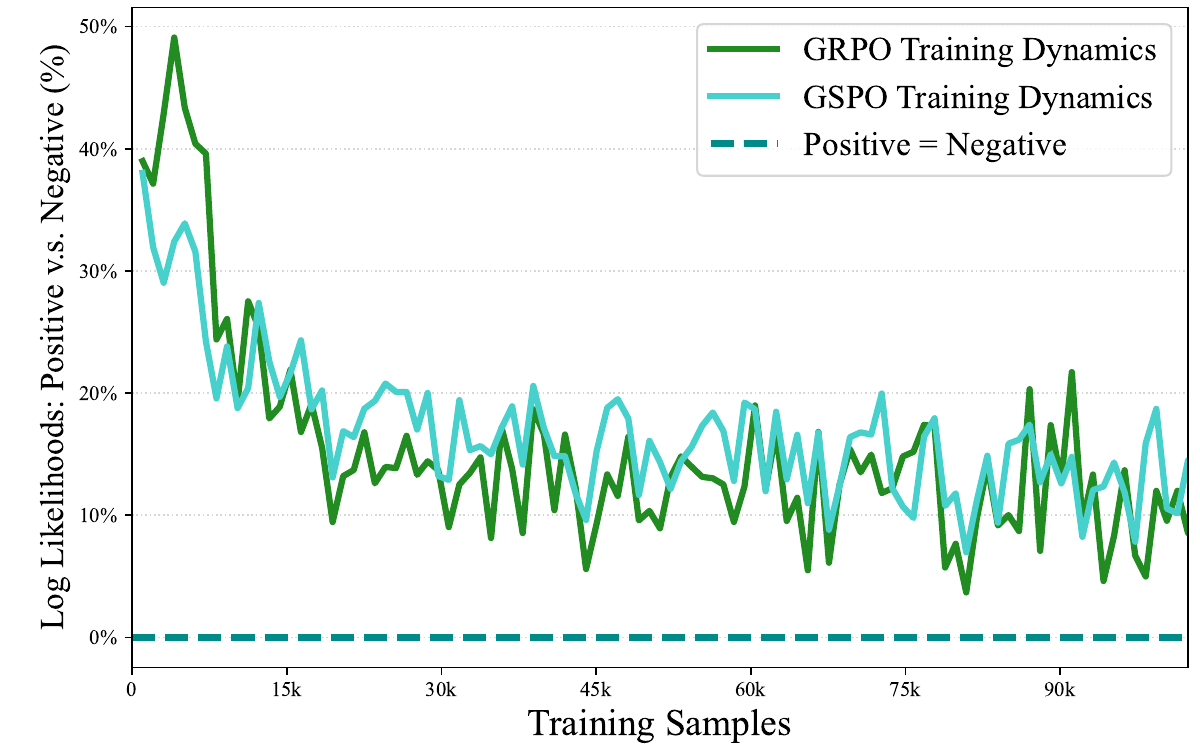}}
    \caption{Empirical justification for entropy analysis. (a) All advantage estimators lead to sufficiently large log likelihoods on average, so that the model confidence condition in Theorem~\ref{theorem:seq_entropy} always holds. (b) The log likelihoods of positive-advantage samples are always larger than those of negative samples, justifying why GRPO/GSPO encounters entropy collapse even with normalized advantages.}
\end{figure}

\begin{tcolorbox}[
  enhanced,
  colback=gray!20!white,
  colframe=black!20!darkgreen,
  boxrule=1pt,
  arc=1mm,
  left=2mm,
  right=2mm,
  top=1mm,
  bottom=1mm,
  fontupper=\normalsize,
  title=Takeaway for Entropy Analysis,
  colbacktitle=gray!20!white,
  coltitle=gray!10!white,
  fonttitle=\bfseries,
  boxed title style={
    colframe=black!20!darkgreen,
    colback=black!20!darkgreen,
    arc=1mm,
    boxrule=0pt,
  },
  attach boxed title to top left={
    xshift=3mm,
    yshift*=-\tcboxedtitleheight/2
  }
]
We prove the relationship between advantages and entropy changes for LLM-based policies. Entropy changes are negatively related to the advantage, and high model confidence will amplify such entropy changes.
\end{tcolorbox}

\subsection{Interpreting the Entropy Dynamics of Existing Methods}
The entropy dynamics of RL algorithms are important indicators of training stability and exploration-exploitation balance. Our theoretical results reveal a clear relationship between entropy change and advantage, explaining the entropy behavior of existing RL algorithms and their performance limitations. We discuss why existing RL algorithms exhibit specific entropy dynamics in the following discussion.

\paragraph{Standard RL Algorithms.}
Our theoretical results imply a categorization of existing RL algorithms that do not explicitly consider entropy. There are three types of algorithms based on their advantage statistics: (\romannumeral1) In positive-advantage RL like RAFT~\citep{dong2023raft} and RAFT++~\citep{xiong2025minimalist}, most RL steps lead to entropy drop; (\romannumeral2) in negative-advantage RL like NSR~\citep{zhusurprising}, most RL steps lead to entropy increase; (\romannumeral3) in zero-mean-advantage RL like GRPO~\citep{shao2024deepseekmath} and GSPO~\citep{zheng2025group}, empirical results show that the entropy still tends to decrease. We interpret this phenomenon by comparing the training dynamics of \texttt{Qwen3-4B-Base}, and find that this is due to the overconfidence of positive samples, as shown in Fig.~\ref{fig:pos_ll_vs_neg_ll_gap}. Models are consistently more confident in the positive samples, allowing the negative entropy changes to dominate the training dynamics.

\paragraph{Clipping.}
Many recent efforts leverage the clipping technique to address entropy collapse, including DAPO~\citep{yu2025dapo}, ADAPO~\citep{petrenkoentropy}, Clip$_{\mathcal{B}/\mathcal{V}}$~\citep{wang2026entropy}, and Clip-Cov~\citep{cui2025entropy}. The mechanism behind clipping is the removal of high-advantage and/or high-confidence tokens, which biases the advantage distribution toward 0-mean. Our theory explains why this works: it reduces expected $|\Delta\mathcal{H}|$ and thereby alleviates entropy drop.

\paragraph{Positive-Negative Decoupled RL.}
Recent studies also propose decoupled objectives for positive (correct) and negative (incorrect) rollout samples, respectively, inspired by the empirical finding that negative-only RL increases entropy~\citep{zhusurprising}. This approach is well explained by our theoretical framework, as it explicitly enforces the sign of advantages. For example, W-Reinforce~\citep{zhusurprising} modifies the coefficients of positive RL: $\mathcal{J}_{\text{W-Reinforce}} = \lambda\cdot\mathcal{J}_{\text{pos}} - \mathcal{J}_{\text{neg}}$, to weaken the entropy drop triggered by the positive objective; EntroPIC~\citep{yang2025entropic} further makes the coefficients adjustable: $\mathcal{J}_{\text{EntroPIC}} = (1+\alpha(\mathcal{H}))\cdot\mathcal{J}_{\text{pos}} - (1-\alpha(\mathcal{H}))\cdot\mathcal{J}_{\text{neg}}$, and eventually converges to a targeted entropy value. 

\section{Methodology}
In this section, we introduce our entropy-control framework (Entrocraft), which builds upon a simple rejection-sampling filter. We begin with rejection sampling in rollout generation (Section~\ref{sec:rejection_sample}), and then introduce the dynamic rejection sampling filter for entropy control (Section~\ref{sec:craft}). Finally, we discuss our insights on entropy curve annealing, highlighting that, for the first time, entropy in RL can be tuned just like learning-rate schedules (Section~\ref{sec:long_term}).

\subsection{Rejection Sampling as a Simple Entropy Controller}\label{sec:rejection_sample}
Our theoretical results suggest that entropy change is not directly tied to model performance. Entropy can remain stable or even increase while training accuracy improves, as long as the positive-advantage rollout samples are filtered out and no longer contribute to the policy gradients. This behavior can be realized by rejection sampling.

Our key observation is that entropy collapse/explosion is a consequence of uncontrolled gradient updates. From the theoretical results in Section~\ref{sec:theory}, the subset of rollouts contributing to the gradient determines whether an update is entropy-decreasing or entropy-increasing. The sign of $\Delta\mathcal{H}$ can be controlled by selecting which rollouts enter the policy gradient. Therefore, rather than developing new RL objectives or adding an auxiliary entropy loss, we find that a simple rejection-sampling filter at rollout generation suffices to precisely control entropy changes.

For example, to increase entropy, we apply rejection sampling to retain only the negative subset: ${\color{highlightgreen}\mathcal{S}_x} = \{y_i | \hat{A}(x,y_i) < 0\}$, and the RL training objective becomes:
\begin{equation}\small
    \mathcal{J}_{\text{rej}}(\bm{\theta}) = \frac{1}{|{\color{highlightgreen}\mathcal{S}_x}|}\sum_{y\in{\color{highlightgreen}\mathcal{S}_x}}\sum_{t=1}^{|y|}\min\left[r_I(t)\cdot\hat{A}(x,y), \text{clip}\left(r_I(t),1-\epsilon_{\text{low}},1+\epsilon_{\text{high}}\right)\hat{A}(x,y)\right],
\end{equation}
with the only difference from the standard RL objective (Eq.~\eqref{eq:grpo}) highlighted in {\color{highlightgreen}{color}}. Rejection sampling provides a simple, objective-agnostic entropy control knob, retaining the strengths of existing RL algorithms while eliminating the risk of entropy collapse or explosion. As it directly modifies the advantage distribution, the filter is responsive enough to move entropy to target values within a few steps, enabling the accurate crafting of entropy curves shown later in Section~\ref{sec:long_term}. The cost is comparable to or lower than standard RL, as only accepted samples contribute to the gradient computation. This can also be monitored by the effective rollout batch sizes as shown in Appendix~\ref{appednix:batch}.

\subsection{Stabilizing and Crafting Entropy Dynamics}\label{sec:craft}
Entropy dynamics have been used to monitor the training stability of RL~\citep{zheng2025stabilizing}. In a stable training run, the entropy curve should be within a reasonable range, neither low enough to trigger performance saturation~\citep{cui2025entropy, lu2024takes, kim2026training}, nor high enough to cause numerical overflow~\citep{zheng2025stabilizing}.

To realize stable entropy dynamics, we apply the rejection sampling filter to dynamically encourage or discourage the exploration of LLMs. The acceptance probability of rejection sampling depends on the current batch entropy $\overline{\mathcal{H}}$ against a target range $(h_{\text{low}},h_{\text{high}})$, in which we use an \emph{entropy out-of-range indicator}: $m = \mathbb{I}(\overline{\mathcal{H}}>h_{\text{high}}) - \mathbb{I}(\overline{\mathcal{H}}<h_{\text{low}})$ to measure the direction of entropy drift. When entropy is too low, the filter rejects most high-advantage rollouts while retaining lower- and negative-advantage ones. When entropy is too high, the filter retains positive-advantage rollouts and rejects most negative samples, steering RL updates toward entropy reduction. The full procedure is given in Algorithm~\ref{algo}.

Entrocraft provides a plug-and-play entropy control framework, applicable to all policy-gradient methods. It treats the entropy curve as a controllable training hyperparameter in the same spirit as a learning-rate schedule, making the training dynamics of RL stable and customizable.

\begin{algorithm}[t]
\caption{Entrocraft}
\label{algo}
\textbf{Inputs:} Question $x$, original rollout samples $\{y_i\}_{i=1}^G$, current policy $\pi_{\bm{\theta}}$, advantage estimator $\hat{A}$, and the entropy range $(h_{\text{low}},h_{\text{high}})$.

\textbf{Outputs:} The actual rollout samples used for the RL update ${\color{highlightgreen}\mathcal{S}_x}$.

${\color{highlightgreen}\mathcal{S}_x} \gets \emptyset$\;
Compute the current entropy $\overline{\mathcal{H}}$\;
$m \gets \mathbb{I}(\overline{\mathcal{H}}>h_{\text{high}}) - \mathbb{I}(\overline{\mathcal{H}}<h_{\text{low}})$ \Comment*[r]{entropy out-of-range indicator}
~\\
\For{$i = 1~..~G$} {
    \If{$m \cdot \hat{A}(x,y_i) \geq 0$} {
        ${\color{highlightgreen}\mathcal{S}_x} \gets \mathcal{S}_x \cup \{y_i\}$ \Comment*[r]{rejection sampling}
    }
}
Return ${\color{highlightgreen}\mathcal{S}_x}$.
\end{algorithm}

\subsection{Precise Entropy Curve Control and Annealing Schedules}\label{sec:long_term}
\paragraph{The Long-Term RL Challenge.} A growing body of work has shown that RL tends to sharpen the base policy around existing solutions rather than discover new ones~\citep{karan2025reasoning, zhangright, yuedoes, zhao2025echo}, a behavior consistent with the entropy collapse observed empirically. Once the policy becomes slightly more confident in a small subset of correct solutions, such solutions will be sampled more often, which further increases their likelihood. The problem is exacerbated in long-term RL. As training rewards rise, the advantage distribution becomes increasingly imbalanced and heavy-tailed, leaving fewer negative-advantage samples to counteract the drift. By Theorems~\ref{theorem:token_entropy} and \ref{theorem:seq_entropy}, these positive-advantage and high-likelihood solutions are mostly entropy-decreasing. The self-reinforcing feedback loop would lead to entropy collapse just within a few steps.

\paragraph{A Constant Entropy Target Is Not Enough.} This fragility motivated us to stress-test Entrocraft under long-term RL training. As demonstrated in Appendix~\ref{abl_study:failure_case}, Entrocraft with a slightly higher constant entropy target would become unstable and fluctuate a lot eventually. We attribute this instability to the imbalance of rollouts, which makes the negative samples so scarce in the long term that Entrocraft's entropy-increasing steps (rejecting all positive samples) rely on very few samples.

\paragraph{Curve Control with Annealing Schedules.} To address this, we propose to anneal the entropy curves as training proceeds. For example, we set the initial entropy target to be around 0.6, and gradually lower this target toward 0.2 during RL training. This can stabilize the training dynamics as it reduces the unstable entropy-increasing steps in the later phase of RL. We compare different annealing schemes in Section~\ref{sec:exp_anneal}, finding that the simple linear-decaying entropy curve achieves the best performance. This annealing design is uniquely enabled by Entrocraft. It converts entropy in RL from a passive training diagnostic into a controllable hyperparameter, extending the toolkit for tuning RL performance to any policy-gradient method.

\section{Experiments}
In this section, we present empirical results to demonstrate the effectiveness of the proposed Entrocraft algorithm. Specifically, we show a comprehensive benchmark comparison in Section~\ref{sec:exp_bench}, elaborate on the entropy curve annealing schemes in Section~\ref{sec:exp_anneal}, and discuss the case of long-term RL in Section~\ref{sec:exp_long_term}.

\subsection{Settings}\label{sec:setting}

\paragraph{Data and Models.}
The experiments described in this section focus on math reasoning tasks, using Numina-Math~\citep{numina_math_7b} (440K questions in total) as the training set. We hold out a 100K subset for general RL experiments, and the full-size dataset is used for long-term RL experiments. We primarily demonstrate the RL results using \texttt{Qwen3-4B-Base}~\citep{yang2025qwen3}, as well as the comparison with larger models (\texttt{Qwen3-8B-Base} and \texttt{Qwen3-14B-Base}) and models from different model families (\texttt{Llama-3.1-8B-Instruct})~\citep{grattafiori2024llama}.

\paragraph{RL Algorithms and Baselines.}
We primarily use the proposed Entrocraft algorithms to augment GRPO~\citep{shao2024deepseekmath} and GSPO~\citep{zheng2025group}. In comparison with Entrocraft, we also implement other entropy-preserving methods on top of these RL algorithms, including loss-regularization (entropy loss), clipping (Clip-Higher~\citep{yu2025dapo} and Clip-Cov~\citep{cui2025entropy}), and positive-negative decoupled RL (W-Reinforce~\citep{zhusurprising} and EntroPIC~\citep{yang2025entropic}).\footnote{The related clipping method ADAPO~\citep{petrenkoentropy} is not compared as it is primarily used for tool-use LLMs, incompatible with our experiments.} The implementation follows the standard verl framework~\citep{sheng2025hybridflow}. Other training details and hyper-parameters are summarized in Appendix~\ref{appendix:implement_detail}.

\paragraph{Evaluation.}
The evaluation scheme consists of AMC-23~\citep{knoveleng2025amc23}, and AIME-24/25/26~\citep{sun2025challenging}. Following previous works~\citep{xiong2025minimalist, zhusurprising, yang2025entropic}, we randomly sample 32 answers per question, with temperature set to 0.6. Due to space constraints, we show results from the full AIME experiments in Appendix~\ref{appendix:aime}.

\begin{table}[t]\small\setlength\tabcolsep{3pt}\renewcommand{\arraystretch}{0.8}\vspace{-40pt}
\caption{Overview of entropy-preserving baselines. We compare the methods along three properties: (\romannumeral1) Can the method reach a target entropy value? (\romannumeral2) Can it control entropy curves? (\romannumeral3) Does it apply to any policy-gradient method?}
\centering
\begin{tabular}{@{}l|ccc@{}}
\toprule
\multicolumn{1}{c|}{\textbf{Method}}       & \textbf{Reach Target Entropy} & \textbf{Control Entropy Curve} & \textbf{Algorithm-Agnostic} \\ \midrule
Entropy Loss                               &                             &                                & \checkmark                        \\
Clip-Higher~\citep{yu2025dapo}             &                             &                                & \checkmark                        \\
Clip-Cov~\citep{cui2025entropy}            &                             &                                & \checkmark                        \\
W-Reinforce~\citep{zhusurprising}          &                             &                                &                                   \\
EntroPIC~\citep{yang2025entropic}          & \checkmark                  &                                &                                   \\
\rowcolor{green!15!white}Entrocraft (ours) & \checkmark                  & \checkmark                     & \checkmark                        \\ \bottomrule
\end{tabular}
\end{table}

\subsection{Benchmark Evaluation}\label{sec:exp_bench}
We first conduct general RL experiments and evaluate the final checkpoints on math reasoning benchmarks, as shown in Table~\ref{tab:main} and Fig.~\ref{fig:scaling}. We demonstrate that the proposed Entrocraft outperforms all other baselines under both mean@32 and pass@32 settings. Fig.~\ref{fig:model_size_vs_mean32} highlights that Entrocaraft can enable \texttt{Qwen3-4B-Base} to outperform \texttt{Qwen3-8B-Base} trained using standard GRPO. The pass@K experiments in Fig.~\ref{fig:pass_at_k_curve_aime25}-\ref{fig:pass_at_k_curve_math500} also demonstrate that Entrocraft successfully prevents the actor from collapsing to just a few solutions, which benefits inference-time scaling.

\begin{table}[t]\small\setlength\tabcolsep{2pt}\renewcommand{\arraystretch}{0.85}\vspace{-50pt}
\caption{Evaluations on math reasoning benchmarks. The proposed Entrocraft consistently improves the final performance of RL algorithms, better than other entropy-preserving methods. The best scores are in \textbf{bold}, and the second-best scores are marked with \underline{underlines}.}
\label{tab:main}
\centering
\begin{tabular}{l|cc|cc|cc|cc}
\toprule
\multicolumn{1}{c|}{\multirow{2}{*}{\textbf{Method}}} & \multicolumn{2}{c|}{\textbf{MATH-500}} & \multicolumn{2}{c|}{\textbf{AMC-23}} & \multicolumn{2}{c|}{\textbf{AIME-25}} & \multicolumn{2}{c}{\textbf{Avg. Score}} \\
\multicolumn{1}{c|}{}                                 & mean@32            & pass@32           & mean@32           & pass@32          & mean@32           & pass@32           & mean@32            & pass@32            \\ \midrule
GRPO                                                  & 75.3               & 89.4              & 57.4              & 92.5             & 8.9               & 40.0              & 47.2               & 74.0               \\
~~~~ + Entropy Loss                                   & 76.0               & 89.8              & 55.0              & 92.5             & 9.9               & 33.3              & 47.0               & 71.9               \\
~~~~ + Clip-Higher                                    & 75.8               & 91.4              & 55.5              & \underline{95.0} & 10.8              & 40.0              & 47.4               & 75.5               \\
~~~~ + Clip-Cov                                       & 76.8               & 91.8              & 57.1              & 90.0             & 10.8              & 33.3              & 48.2               & 71.7               \\
\rowcolor{green!15!white}~~~~ + Entrocraft            & \textbf{79.0}      & \textbf{93.0}     & \textbf{65.0}     & \underline{95.0} & \textbf{15.1}     & \textbf{46.7}     & \textbf{53.5}      & \textbf{78.2}      \\ \midrule
GSPO                                                  & 75.6               & 90.2              & \underline{57.7}  & 92.5             & 8.9               & 33.3              & 47.4               & 72.0               \\
~~~~ + Entropy Loss                                   & 74.7               & 90.6              & 55.9              & \textbf{97.5}    & 11.9              & 33.3              & 47.5               & 73.8               \\
~~~~ + Clip-Higher                                    & 75.5               & 90.4              & 54.7              & \underline{95.0} & 10.0              & 40.0              & 46.7               & 75.1               \\
~~~~ + Clip-Cov                                       & 75.3               & 91.2              & 56.3              & 92.5             & 10.0              & 40.0              & 47.2               & 74.6               \\
\rowcolor{green!15!white}~~~~ + Entrocraft            & \underline{78.7}   & \underline{92.2}  & 56.1              & \underline{95.0} & \underline{14.7}  & 40.0              & \underline{49.8}   & \underline{75.7}   \\ \midrule
W-Reinforce                                           & 74.9               & 90.8              & 53.5              & 90.0             & 11.3              & 40.0              & 46.6               & 73.6               \\
EntroPIC                                              & 73.7               & 90.2              & 55.3              & 92.5             & 10.9              & \underline{43.3}  & 46.6               & 75.3               \\ \bottomrule
\end{tabular}
\end{table}

\begin{figure}[t]\vspace{-10pt}
    \centering
    \subfloat[Model Size v.s. AIME-25\label{fig:model_size_vs_mean32}]{\includegraphics[width=.33\columnwidth,height=3.2cm]{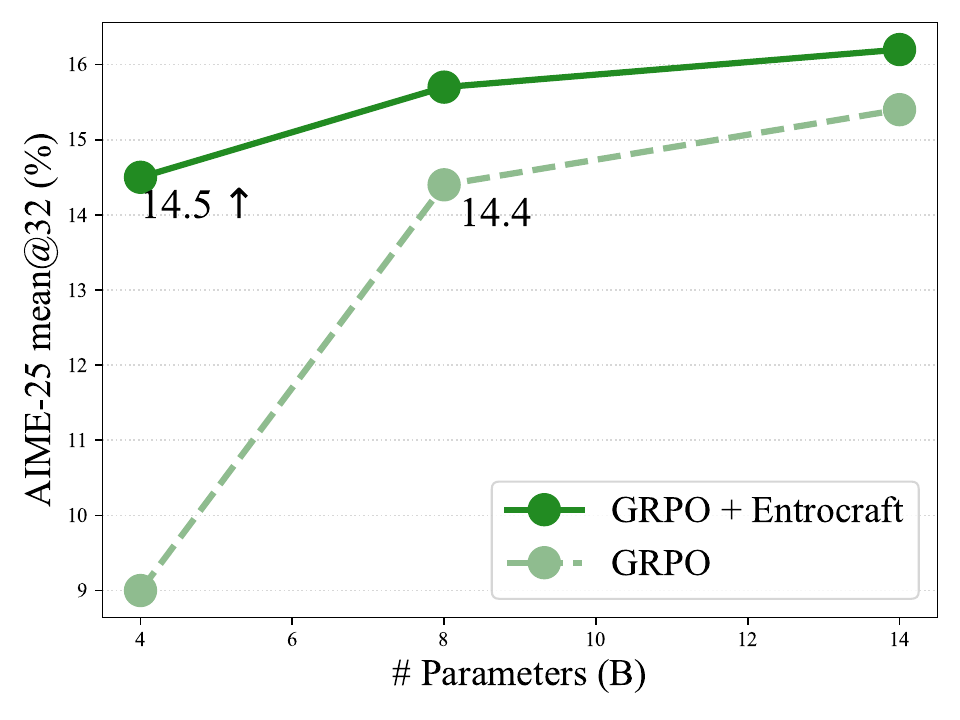}}
    \subfloat[pass@K curve on AIME-25\label{fig:pass_at_k_curve_aime25}]{\includegraphics[width=.33\columnwidth,height=3.2cm]{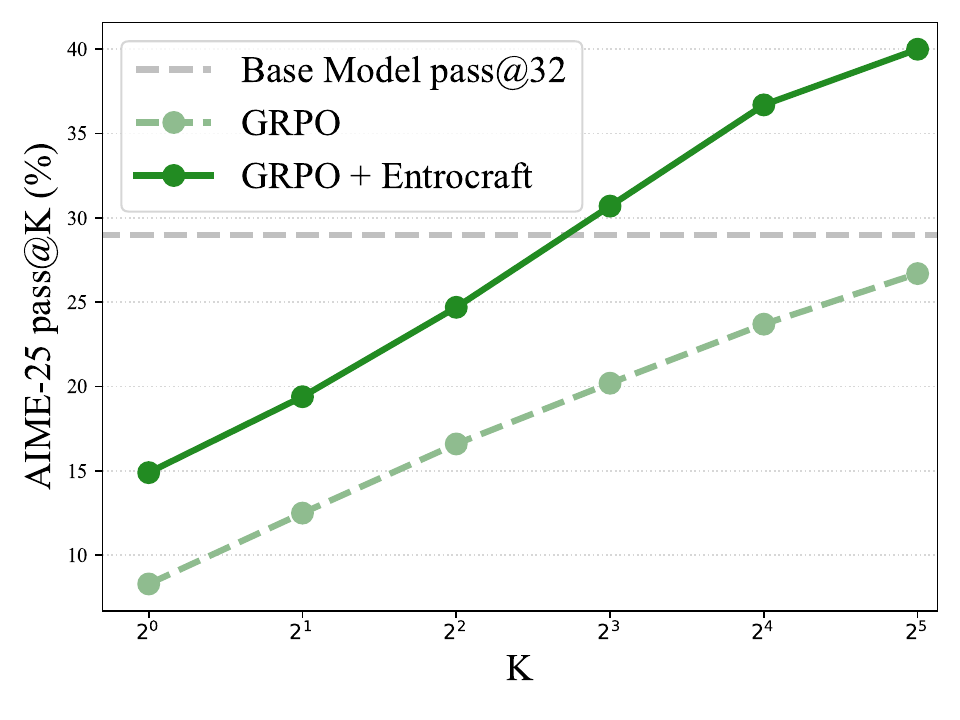}}
    \subfloat[pass@K curve on MATH-500\label{fig:pass_at_k_curve_math500}]{\includegraphics[width=.33\columnwidth,height=3.2cm]{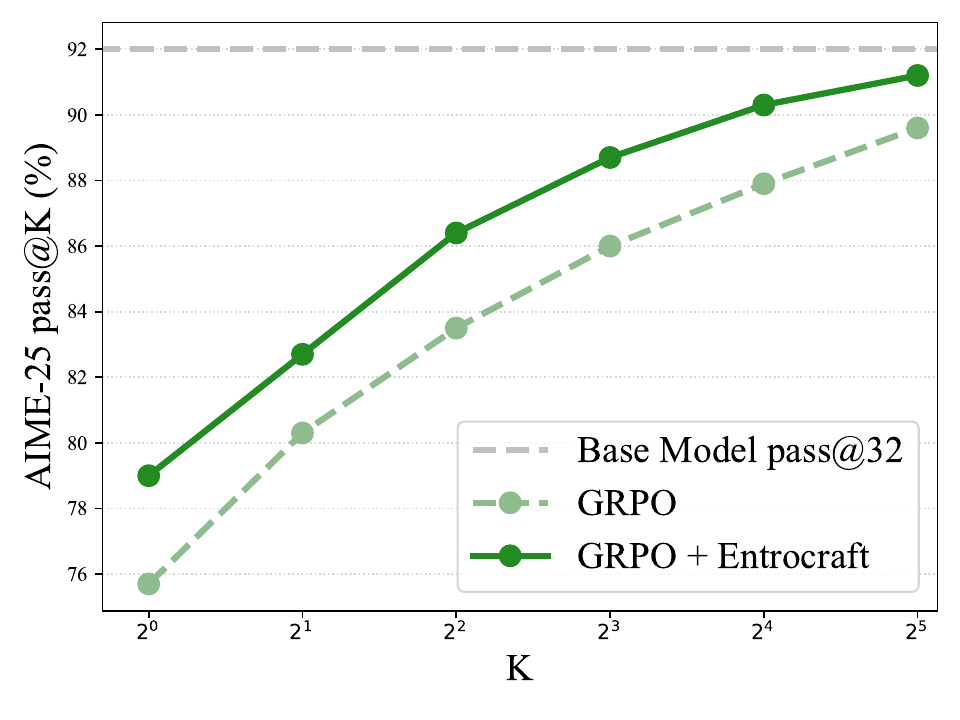}}
    \caption{Effectiveness of Entrocraft across model size and inference cost. (a) A 4B model can outperform an 8B model with Entrocraft; (b-c) Entrocraft improves inference-time scaling, with pass@K growing faster than under standard GRPO.}
    \label{fig:scaling}
\end{figure}

\subsection{Crafting Entropy Curves}\label{sec:exp_anneal}
We demonstrate the entropy control capability of Entrocraft. The entropy control powered by the proposed entropy-guided rejection sampling filter is powerful enough that entropy curves can be crafted directly, much like learning-rate schedules. As empirical evidence of entropy annealing introduced in Section~\ref{sec:long_term}, we comprehensively compare three entropy annealing schemes (fixed target, linear decay, and cosine decay) in Fig.~\ref{fig:decay_scheme}.\footnote{We set the fixed entropy target to be 0.5. Both linear and cosine decaying use annealing entropy range schedules, starting at (0.6, 0.7) and ending at (0.1, 0.2)} The fixed-target scheme, similar to previous entropy-control studies~\citep{yang2025entropic, petrenkoentropy}, becomes unstable after the first 200k training samples. Its entropy curve fluctuates sharply, leading to a drop in performance. In contrast, decaying schemes eliminate the instability and sustain improvement even after 400k training samples. 

\begin{figure}[t]\vspace{-50pt}
    \centering
    \subfloat[Training Reward]{\includegraphics[width=.33\columnwidth,height=3cm]{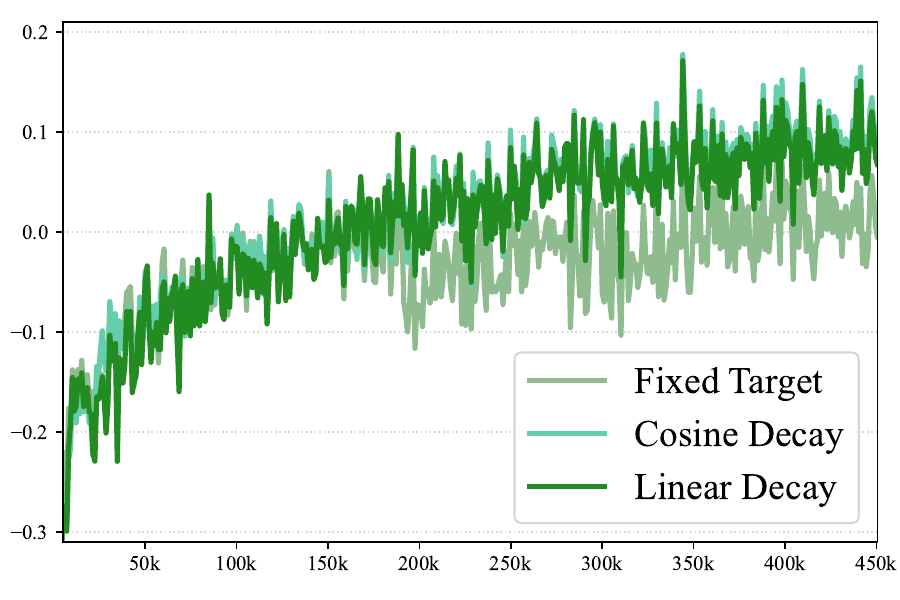}}
    \subfloat[Entropy]{\includegraphics[width=.33\columnwidth,height=3cm]{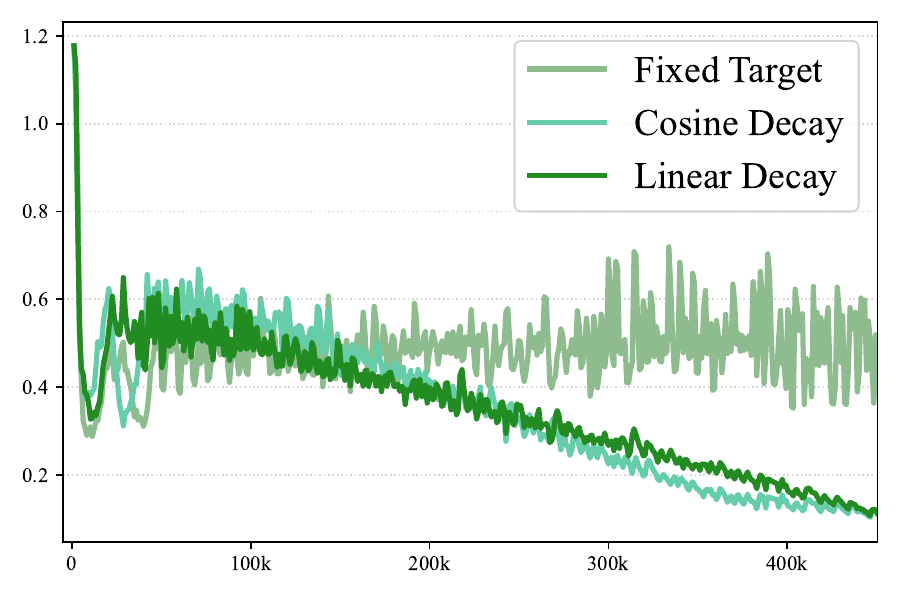}}
    \subfloat[KL Loss]{\includegraphics[width=.33\columnwidth,height=3cm]{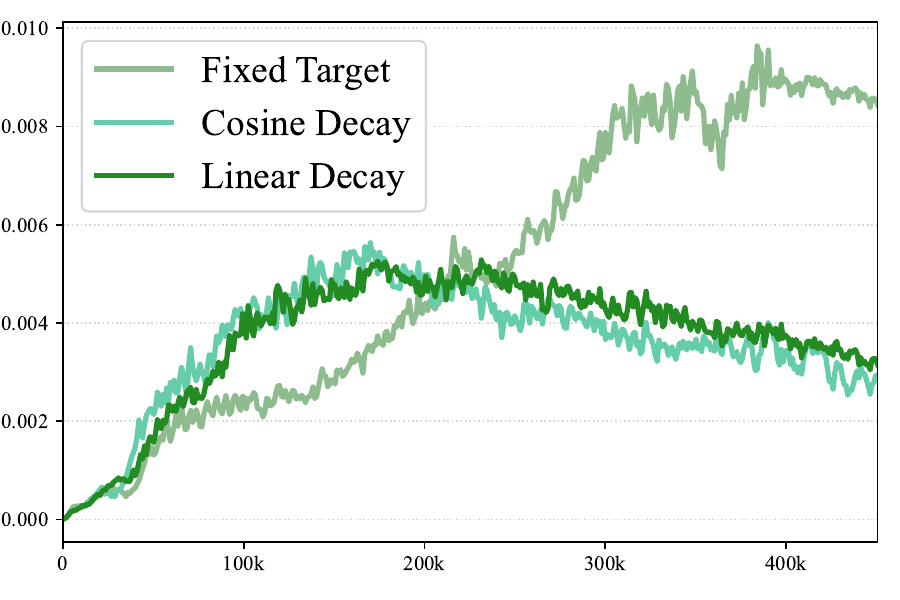}}
    \\
    \subfloat[MATH-500 mean@32]{\includegraphics[width=.33\columnwidth,height=3cm]{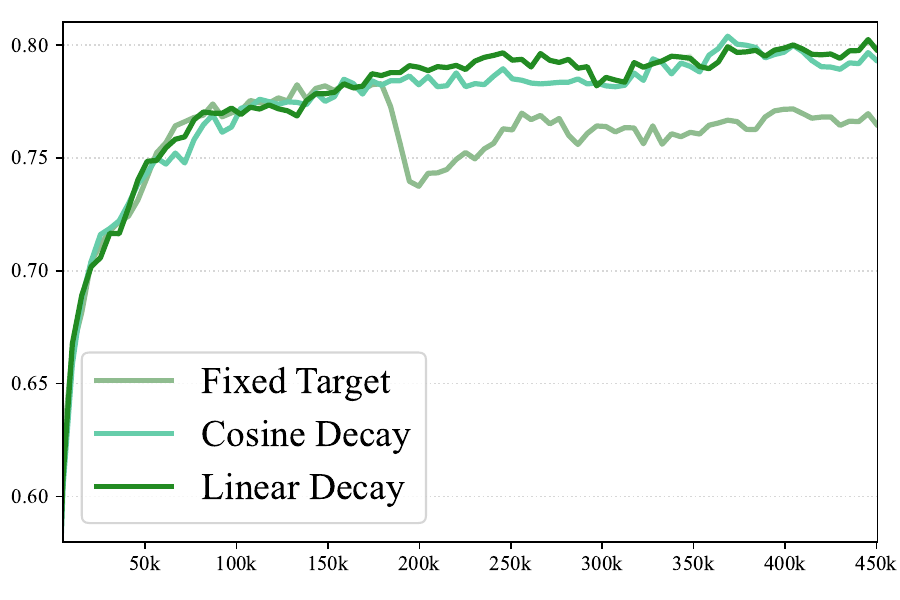}}
    \subfloat[AIME-25 mean@32]{\includegraphics[width=.33\columnwidth,height=3cm]{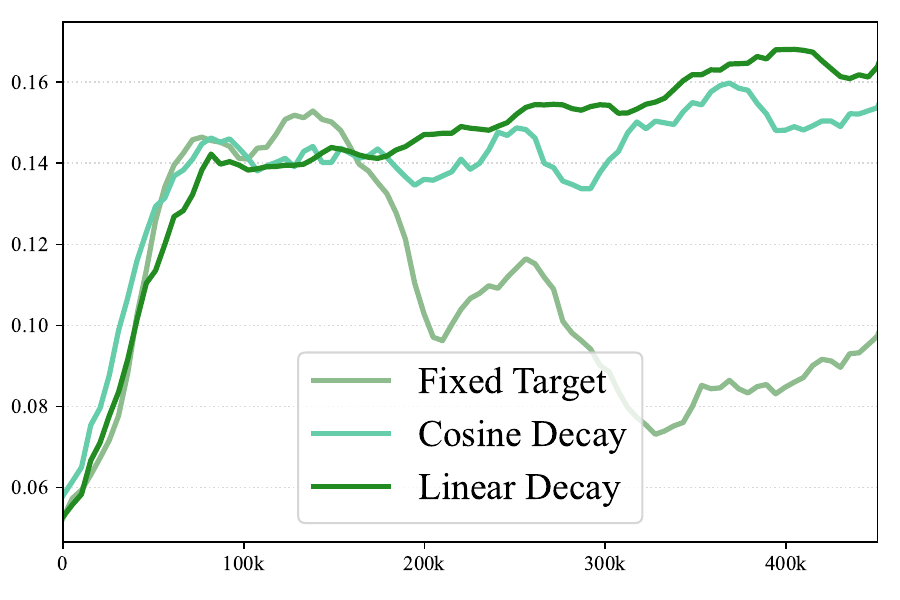}}
    \subfloat[AIME-26 mean@32]{\includegraphics[width=.33\columnwidth,height=3cm]{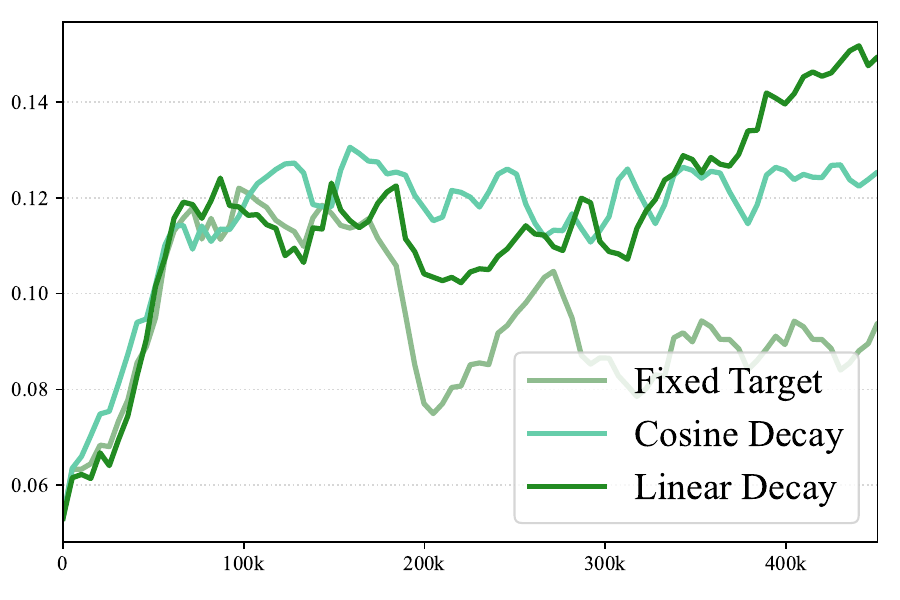}}
    \caption{Long-term training dynamics of three entropy annealing schemes implemented in Entrocraft. The fixed-target scheme becomes unstable due to rollout imbalance. Both linear and cosine decay schemes remain stable and sustain improvement, and linear decay is slightly better. x-axis: the number of samples used for training.}
    \label{fig:decay_scheme}
\end{figure}

\subsection{Long-Term RL}\label{sec:exp_long_term}
Finally, we demonstrate continual improvement in long-term RL powered by the proposed Entrocraft. As shown in Fig.~\ref{fig:long_term_curve} and Table~\ref{tab:long_term}, standard RL algorithms like GRPO~\citep{shao2024deepseekmath} improve smoothly through the first 100K training samples. However, we observe only minimal sustained gains thereafter, a phenomenon known as performance saturation~\citep{cui2025entropy, lu2024takes, kim2026training}. In contrast, entropy-preserving methods alleviate this saturation and achieve better final performance. Among all entropy-preserving methods, the proposed Entrocraft achieves the best long-term performance, surpassing all compared baselines at any of the 4 stages. We attribute this to its precise entropy control, which prevents entropy from drifting and avoids entropy instability common in uncontrolled entropy-preserving methods~\citep{yu2025dapo, cui2025entropy}. As a concrete illustration, Clip-Cov~\citep{cui2025entropy} suffers a performance drop after 300K training samples, caused by entropy explosion (Fig.~\ref{fig:entropy_4}) that destabilizes training in the later phase.

\begin{figure}[ht]\vspace{-15pt}
    \centering
    \subfloat[Entropy\label{fig:entropy_4}]{\includegraphics[width=.33\columnwidth,height=3cm]{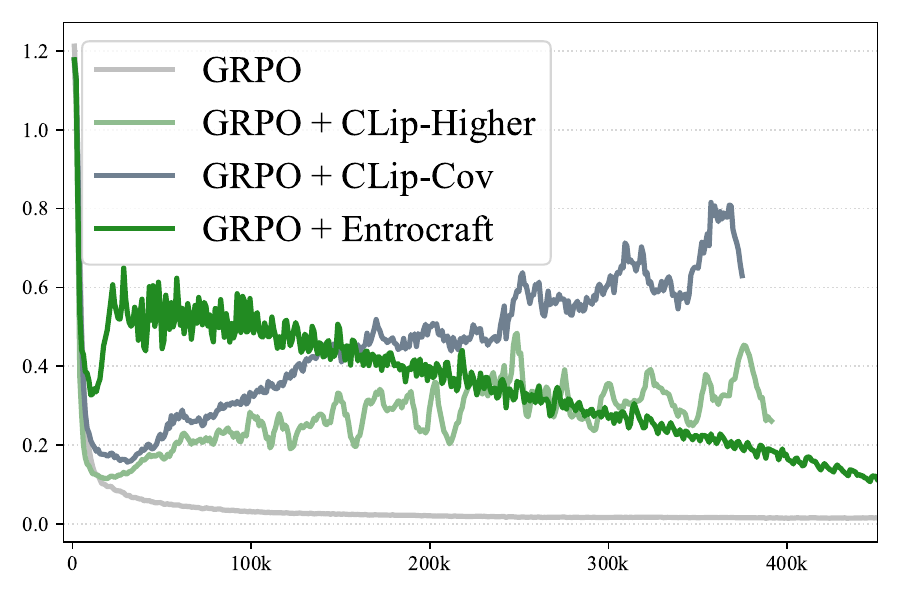}}
    \subfloat[MATH-500 mean@32]{\includegraphics[width=.33\columnwidth,height=3cm]{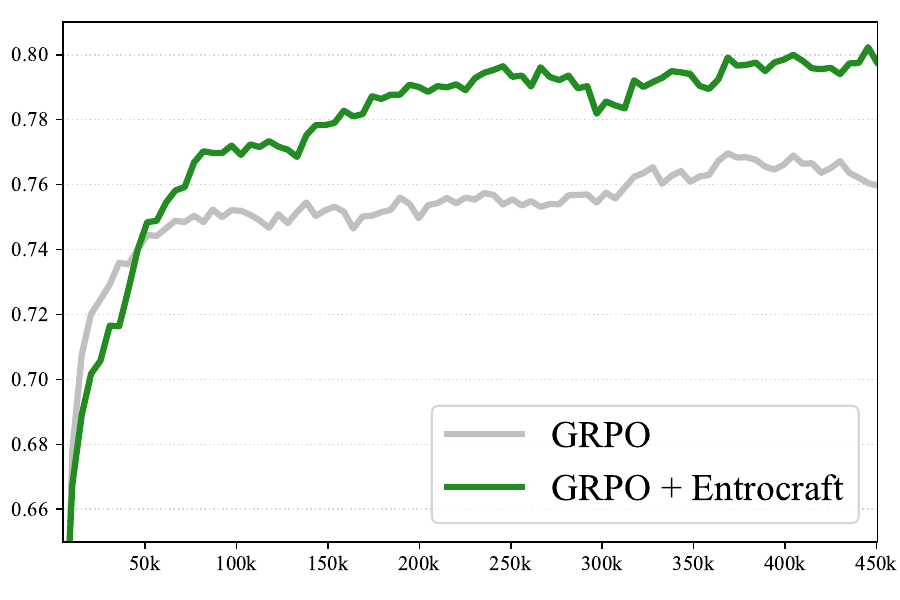}}
    \subfloat[AIME-25 mean@32]{\includegraphics[width=.33\columnwidth,height=3cm]{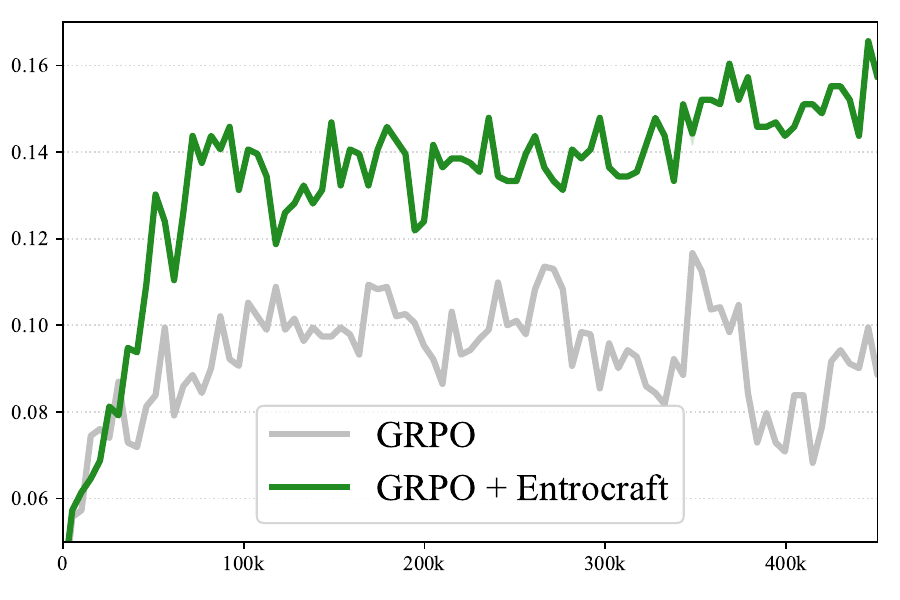}}
    \caption{Long-term performance comparison. Entrocraft accurately controls the entropy dynamics to be linearly decaying, and thus prevents the performance saturation and significantly improves over GRPO. x-axis: the number of samples used for training.}
    \label{fig:long_term_curve}
\end{figure}

\begin{table}[ht]\small\renewcommand{\arraystretch}{0.85}\vspace{-15pt}
\caption{Long-term RL comparison at the first 100-400K training samples. GRPO suffers from performance saturation after 100K samples due to entropy collapse, while entropy-preserving methods overcome this saturation. Among all, Entrocraft is the most stable and well-performing one.}
\label{tab:long_term}
\centering
\begin{tabular}{l|cccc|cccc}
\toprule
\multicolumn{1}{c|}{\multirow{2}{*}{\textbf{Method}}} & \multicolumn{4}{c|}{\textbf{MATH-500} mean@32}                & \multicolumn{4}{c}{\textbf{AIME-25} mean@32}                  \\
\multicolumn{1}{c|}{}                                 & 100K          & 200K          & 300K          & 400K          & 100K          & 200K          & 300K          & 400K          \\ \midrule
GRPO                                                  & 75.2          & 75.4          & 75.6          & 76.6          & 9.9           & 10.1          & 9.6           & 9.4           \\
~~~~ + Clip-Higher                                    & 76.0          & 77.6          & 78.4          & 78.8          & 9.6           & 12.3          & 12.8          & 13.6          \\
~~~~ + Clip-Cov                                       & 76.5          & 78.1          & \textbf{79.7} & 77.5          & 10.9          & 12.5          & 14.2          & 15.0          \\
\rowcolor{green!15!white}~~~~ + Entrocraft            & \textbf{76.9} & \textbf{78.9} & 79.3          & \textbf{79.9} & \textbf{13.1} & \textbf{14.2} & \textbf{14.8} & \textbf{16.2} \\ \bottomrule
\end{tabular}
\end{table}

\section{Conclusion}
This paper introduces Entrocraft, a simple and precise entropy-control method that addresses entropy collapse and the consequent performance saturation in RL. We first provide an LLM-oriented theoretical analysis of what drives entropy change, then design a rejection-sampling-based method that accurately controls entropy by biasing the advantage distribution. Experiments demonstrate that Entrocraft enables highly customizable entropy curves and consistently outperforms existing entropy-preserving methods across benchmarks. Entrocraft integrates as a drop-in to any policy-gradient method, enabling continual improvement on more data without saturation.

\section*{Ethics and Broader Impact Statement}\label{sec:ethics}
The paper does not involve human-subject data collection, personally identifiable information, or deployment in safety-critical settings. Potential risks include enabling stronger reasoning models that may also be misused in broader downstream applications. However, the paper primarily contributes a training-stability technique rather than a new capability domain. We document datasets, implementation details, hyperparameters, and compute requirements, supporting external scrutiny while discouraging inappropriate use.


\small{
\bibliography{neurips_2026}
\bibliographystyle{plainnat}
}


\newpage
\appendix
\small

\section{Discussion}
\subsection{Related Works}
\paragraph{Reinforcement Learning with Verifiable Rewards}
Reinforcement Learning~\citep{kaelbling1996reinforcement} has become the dominant approach in the post-training of LLMs, in which LLMs generate a set of rollout trajectories (exploration) and then enforce the rewarded trajectories while punishing the less-rewarded ones (exploitation). Defining the reward scores for LLMs’ trajectories is a critical challenge. Reinforcement Learning from human feedback (RLHF)~\citep{ouyang2022training} separately trains a reward model (RM) as the proxy of human preference. However, RM-based reward has been shown to suffer from significant reward hacking problems~\citep{weng2024rewardhack}, which makes the LLMs poorly generalized to new questions. Luckily, post-training of LLMs has increasingly focused on tasks that exhibit long, complex trajectories with simple verification, including math reasoning~\citep{cobbe2021training, grotschla2025benchmarking} and code generation~\citep{chen2021evaluating, jainlivecodebench}. For example, the famous o1~\citep{jaech2024openai} and R1~\citep{guo2025deepseek} use RL with verifiable rewards to elicit the complex reasoning capability of LLMs, and the GRPO~\citep{shao2024deepseekmath} has become the default RL algorithm for training many reasoning models. Recent works propose many variants of GRPO to mitigate its drawbacks, including the length inflation (Dr. GRPO~\citep{liuunderstanding} and GFPO~\citep{shrivastavasample}) and training instability (GSPO~\citep{zheng2025group} and \citet{zheng2025stabilizing}). The key challenge among these drawbacks is the scalability of RL algorithms: with more data and compute, will RL algorithms continually improve the model performance?

\paragraph{Entropy Dynamics of Reinforcement Learning}
The entropy has been used as a key indicator for models' exploration during RL~\citep{williams1991function, mnih2016asynchronous, haarnoja2017reinforcement}, which is essential for achieving the upper limit of model performance~\citep{sutton1998reinforcement}. The entropy collapse phenomenon has long been regarded as the default behavior of RL algorithms, trading exploration potential for higher rewards~\citep{cui2025entropy, karan2025reasoning, zhangright}. However, recent synthetic theoretical analysis reveals that the entropy collapse is related to the bias of the advantage distribution, and it is possible to improve performance without an entropy drop~\citep{skydownacai2025rl, cui2025entropy, yang2025entropic, wang2026entropy, shen2025entropy}. Further, this paper (Entrocraft) and a concurrent work~\citep{petrenkoentropy} prove the relationship between entropy changes and advantages under realistic LLM settings. Empirically, the techniques used to avoid entropy collapse include entropy loss\footnote{\citet{shen2025entropy} indicates that entropy loss only works well under traditional RL tasks where the discrete action space is small, and this effect is marginal for LLM RL.}, clipping (Clip-Higher~\citep{yu2025dapo}, Clip-Cov~\citep{cui2025entropy}, and Clip$_{\mathcal{B}/\mathcal{V}}$~\citep{wang2026entropy}), and positive-negative decoupled RL (W-Reinforce~\citep{zhusurprising}, EntroPIC~\citep{yang2025entropic},  and ADAPO~\citep{petrenkoentropy}). However, the entropy control implemented using the above methods is still not responsive enough, which makes the entropy curves still an observation-based metric, not customizable. In contrast, the proposed Entrocraft makes the entropy control accurate enough for entropy curve crafting, turning entropy dynamics from an observation-only metric to a hyperparameter like learning-rate schedules.

\subsection{Limitation and Future Directions}\label{appendix:limit_future}
This paper focuses on the stability and continual improvement of RL algorithms, and has validated the method's effectiveness on standard math reasoning tasks with dense models. However, in more challenging settings like multi-turn RL and mixture-of-expert (MoE) models, the entropy instability is more catastrophic~\citep{xu2025epo, zheng2025stabilizing} due to the more sparse solution space and the variable policies. The current method design is optimized for single-turn math reasoning task. We plan to extend the entropy analysis and entropy-control algorithm to such settings.

\section{Proofs}\label{appendix:proof}
\subsection{Proof of Theorem~\ref{theorem:token_entropy}}
\paragraph{Token-Level Entropy Change of LLMs.}
Consider a single RL update step. Let $p_k$ be the probability that token $k$ is sampled during rollout generation. The sign of the entropy change triggered by token $k$ is bounded by the sign of the advantage score:

$$
\hat{A}_k \cdot \Delta\mathcal{H} \leq 0,
$$

with probability $1 - \prod_{i\in\mathbb{V}\backslash\{k\}} p_i^{-\frac{\delta p_i}{\delta p_k}}$, where $\delta p_i$ is the probability change at this RL step.

\textbf{\emph{Proof:}}

Let $\bm{p}$ be the probability vector before the RL update and $\delta\bm{p}$ be the update. The entropy change can be approximated by Taylor expansion~\citep{abramowitz1948handbook}:

\begin{equation}
\begin{aligned}
    \Delta\mathcal{H} &= \mathcal{H}(\bm{p}+\delta\bm{p}) - \mathcal{H}(\bm{p}) \\ 
    &= \sum_{i\in\mathbb{V}}\frac{\partial\mathcal{H}}{\partial p_i} \delta p_i + O(\|\delta\bm{p}\|_2^2) \\ 
    &= -\sum_{i\in\mathbb{V}}(1+\log p_i)\cdot\delta p_i + O(\|\delta\bm{p}\|_2^2) \\ 
    &\overset{\text{(a)}}{=} -\sum_{i\in\mathbb{V}}\delta p_i \log p_i + O(\|\delta\bm{p}\|_2^2).
\end{aligned}
\end{equation}

Here, (a) is due to the constraint on output probabilities: $\sum_{i\in\mathbb{V}}p_i \equiv 1$. Then, we identify the condition for entropy to decrease (i.e., $\Delta\mathcal{H} < 0$). This occurs when:

\begin{equation}
    \sum_{i\in\mathbb{V}}\delta p_i \log p_i > 0.
\end{equation}

For token $k$ with positive advantage $\hat{A}_k > 0$, the RL update increases its probability: $\delta p_k > 0$. By the probability conservation constraint $\sum_{i\in\mathbb{V}}\delta p_i \equiv 0$, we have $\sum_{i\in\mathbb{V}\backslash\{k\}}\delta p_i = -\delta p_k < 0$. Then, the entropy change can be rewritten as:

\begin{equation}
    \Delta\mathcal{H} = -\delta p_k \log p_k - \sum_{i\in\mathbb{V}\backslash\{k\}}\delta p_i \log p_i + O(\|\delta\bm{p}\|_2^2).
\end{equation}

Since $\delta p_i = \delta p_k \frac{\delta p_i}{\delta p_k}$ for $i\neq k$, we obtain:

\begin{equation}
\begin{aligned}
    \Delta\mathcal{H} &= -\delta p_k \left(\log p_k + \sum_{i\in\mathbb{V}\backslash\{k\}}\frac{\delta p_i}{\delta p_k}\log p_i\right) + O(\|\delta\bm{p}\|_2^2) \\ 
    &= -\delta p_k \left(\log p_k - \log\prod_{i\in\mathbb{V}\backslash\{k\}} p_i^{-\frac{\delta p_i}{\delta p_k}}\right) + O(\|\delta\bm{p}\|_2^2).
\end{aligned}
\end{equation}

Entropy decreases ($\Delta\mathcal{H} < 0$) when the term in parentheses is positive: $\log p_k > \log\prod_{i\in\mathbb{V}\backslash\{k\}} p_i^{-\frac{\delta p_i}{\delta p_k}}$. This condition holds with probability $1 - \prod_{i\in\mathbb{V}\backslash\{k\}} p_i^{-\frac{\delta p_i}{\delta p_k}}$, which approaches 1 as probability mass concentrates on token $k$. Symmetrically, for tokens with negative advantage, entropy increases. Therefore, $\hat{A}_k \cdot \Delta\mathcal{H} \leq 0$ holds with high probability.

\hfill $\square$

\subsection{Proof of Theorem~\ref{theorem:seq_entropy}}
\paragraph{Sequence-Level Entropy Change of LLMs.}
Consider a single RL update step and assume all tokens share the same outcome reward. Let $p_{t,i} = \pi_{\bm{\theta}}(y_t=i|x,y_{<t})$ be the probability that $i$ is sampled as the $t$-th token in the sequence. The sign of the entropy change triggered by response $y$ is bounded by the sign of the advantage score:

$$
\hat{A}(x,y) \cdot \Delta\mathcal{H} \leq 0,~~~~\text{if}~~\pi_{\bm{\theta}}(y|x) \geq \prod_{t=1}^{|y|}\prod_{i\in\mathbb{V}\backslash\{y_t\}} p_{t,i}^{-\frac{\delta p_{t,i}}{\delta p_{t,y_t}}},
$$

where $\delta p_{t,i}$ is the probability change at this RL step.

\textbf{\emph{Proof:}}

Similar to the token-level entropy, by Taylor expansion~\citep{abramowitz1948handbook}, the sequence-level entropy change can be expressed as:

\begin{equation}
\begin{aligned}
    \Delta\mathcal{H} &= \sum_{t=1}^{|y|}\sum_{i\in\mathbb{V}}\frac{\partial\mathcal{H}}{\partial p_{t,i}}\cdot\delta p_{t,i} + O\left(\sum_{t=1}^{|y|}\|\delta\bm{p}_t\|_2^2\right) \\ 
    &= -\frac{1}{|y|}\sum_{t=1}^{|y|}\sum_{i\in\mathbb{V}}(1+\log p_{t,i})\cdot\delta p_{t,i} + O\left(\sum_{t=1}^{|y|}\|\delta\bm{p}_t\|_2^2\right) \\ 
    &\overset{\text{(b)}}{=} -\frac{1}{|y|}\sum_{t=1}^{|y|}\sum_{i\in\mathbb{V}}\delta p_{t,i}\cdot\log p_{t,i} + O\left(\sum_{t=1}^{|y|}\|\delta\bm{p}_t\|_2^2\right) \\ 
    &= -\frac{1}{|y|}\sum_{t=1}^{|y|}\delta p_{t,y_t} \left(\log p_{t,y_t} + \sum_{i\in\mathbb{V}\backslash\{y_t\}}\frac{\delta p_{t,i}}{\delta p_{t,y_t}}\cdot\log p_{t,i}\right) + O\left(\sum_{t=1}^{|y|}\|\delta\bm{p}_t\|_2^2\right).
\end{aligned}
\end{equation}

Here, (b) is due to the constraint on output probabilities: $\sum_{i\in\mathbb{V}}p_{t,i} \equiv 1$. Now assume the estimated advantage is positive: $\hat{A}(x,y) > 0$. The corresponding probability changes for the entire response are then all positive: $\min_t \delta p_{t,y_t} > 0$. To simplify the expression, we define the \emph{effective token probability change} as the weighted average of all independent token probabilities:

\begin{equation}
    \overline{\delta p_{y}} = \frac{\sum_{t=1}^{|y|} \delta p_{t,y_t} \cdot \left(\log p_{t,y_t} + \sum_{i\in\mathbb{V}\backslash\{y_t\}}\frac{\delta p_{t,i}}{\delta p_{t,y_t}}\cdot\log p_{t,i}\right)}{\sum_{t=1}^{|y|} \log p_{t,y_t} + \sum_{i\in\mathbb{V}\backslash\{y_t\}}\frac{\delta p_{t,i}}{\delta p_{t,y_t}}\cdot\log p_{t,i}} > 0.
\end{equation}

The entropy change can then be simplified as:

\begin{equation}
\begin{aligned}
    \Delta\mathcal{H} &\approx -\frac{1}{|y|} \cdot \overline{\delta p_{y}} \cdot \left(\log\prod_{t=1}^{|y|}p_{t,y_t} - \log\prod_{t=1}^{|y|}\prod_{i\in\mathbb{V}\backslash\{y_t\}}p_{t,i}^{-\frac{\delta p_{t,i}}{\delta p_{t,y_t}}}\right) \\
    &= -\frac{1}{|y|} \cdot \overline{\delta p_{y}} \cdot \left(\log\pi_{\bm{\theta}}(y|x) - \log\prod_{t=1}^{|y|}\prod_{i\in\mathbb{V}\backslash\{y_t\}}p_{t,i}^{-\frac{\delta p_{t,i}}{\delta p_{t,y_t}}}\right),
\end{aligned}
\end{equation}

showing that entropy change is negatively related to advantage when the likelihood of the generated response exceeds a threshold: $\pi_{\bm{\theta}}(y|x) > \prod_{t=1}^{|y|}\prod_{i\in\mathbb{V}\backslash\{y_t\}} p_{t,i}^{-\frac{\delta p_{t,i}}{\delta p_{t,y_t}}}$. The same derivation applies to the negative advantage case ($\hat{A(x,y)<0}$). Therefore, $\hat{A}(x,y) \cdot \Delta\mathcal{H} \leq 0$ holds when the rollout sample likelihood is sufficiently high.

\hfill $\square$


\section{Additional Experiment Details}
\subsection{Implementation Details}\label{appendix:implement_detail}
The implementation is based on the verl framework~\citep{sheng2025hybridflow}, which uses vLLM~\citep{kwon2023efficient} for inference and FSDP2~\citep{zhao2023pytorch} for training. All experiments are conducted on a node of 8 NVIDIA H100 GPUs. We summarize the core hyperparameters in Table~\ref{tab:hyperparameter}. The average training time for \texttt{Qwen3-4B-Base} is around 10K training samples per hour.

\begin{table}[h!]
\caption{Core hyperparameters for training implementation. The origin of each hyperparameter is listed, and all empty-source hyperparameters are determined by our experiments.}
\label{tab:hyperparameter}
\centering
\begin{tabular}{lll}
\toprule
\textbf{Hyperparameter}          & \textbf{Value} & \textbf{Source}             \\ \midrule
\texttt{train\_batch\_size}      & 1024           & verl examples               \\
\texttt{ppo\_mini\_batch\_size}  & 8$\times$32    & verl examples               \\
\texttt{max\_context\_length}    & 1024 + 3072    & \citet{xiong2025minimalist} \\
\texttt{rollout.n}               & 8              &                             \\ \midrule
\texttt{optim.lr}                & 1e-6           & verl examples               \\
\texttt{kl\_loss\_coef}          & 1e-3           & verl examples               \\ \midrule
\texttt{val\_kwargs.n}           & 32             &                             \\
\texttt{val\_kwargs.temperature} & 0.6            &                             \\ \bottomrule
\end{tabular}
\end{table}

\subsection{Entropy Curves of Baselines}\label{appendix:entropy_curve}
We show the full entropy curves of baselines in Fig.~\ref{fig:baseline_entropy}. Standard GRPO~\citep{shao2024deepseekmath} exhibits entropy collapse. On top of GRPO, many entropy-preserving methods successfully alleviate the entropy-decreasing trend. However, they are not necessarily responsive enough to enable accurate entropy control and may cause the entropy curves to be unstable in the long term. In contrast, the proposed Entrocraft accurately stabilizes entropy at the target (0.8), making exploration controllable as a hyperparameter. 

\begin{figure}[h!]
    \centering
    \includegraphics[width=\linewidth]{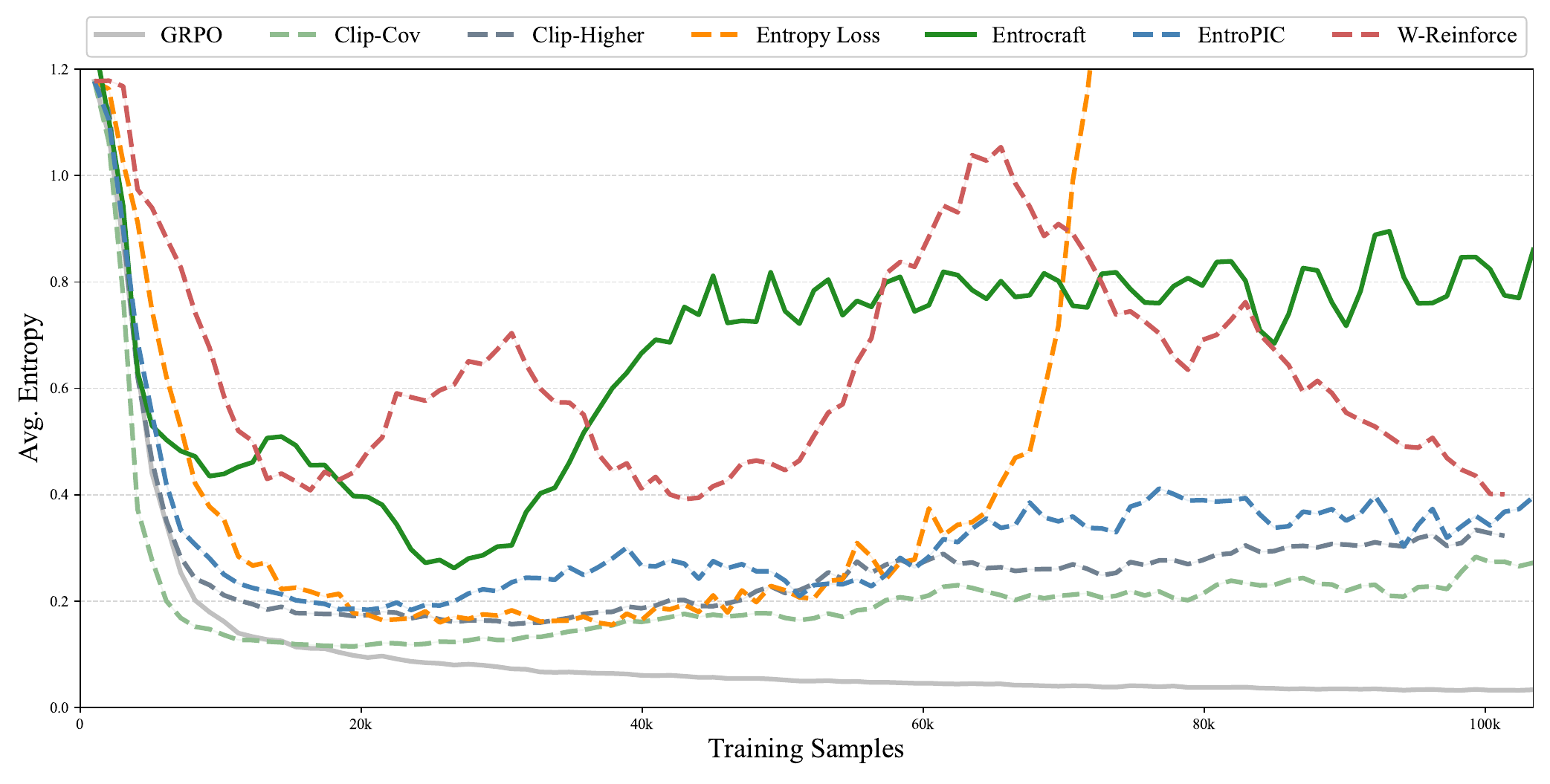}
    \caption{Entropy curve comparison across baselines. Other entropy-preserving methods may induce instability during long-term training or may not be sufficiently responsive. In contrast, the proposed Entrocraft effectively controls the entropy curve to be the customized shape.}
    \label{fig:baseline_entropy}
\end{figure}

\subsection{Effective Batch Size}\label{appednix:batch}
To explicitly show the effect of Entrocraft on the number of rollout samples used for gradient update, we pick the steps that trigger rejection sampling, and show their effective batch sizes throughout the training in Fig~\ref{fig:batch}. The RL training is GRPO + Entrocraft with initial \texttt{rollout.n=8}, and the entropy range setting is the linear decaying as used in Section~\ref{sec:exp_anneal}. The dropping effective batch sizes verify that Entrocraft requires less gradient computation than naive GRPO.

\begin{figure}[h!]
    \centering
    \includegraphics[width=\linewidth]{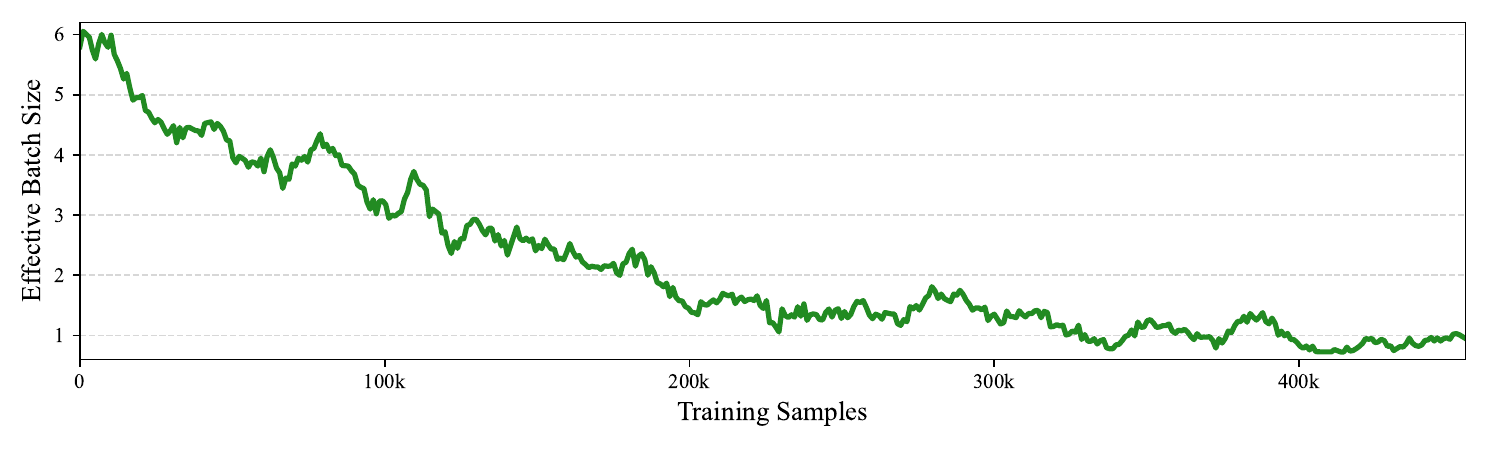}
    \caption{Effective batch sizes for the RL steps affected by Entrocraft. The negative samples become less as the model's performance improves.}
    \label{fig:batch}
\end{figure}

\subsection{Full Evaluation on AIME}\label{appendix:aime}
In addition to the benchmark evaluations of Table~\ref{tab:main}, we list the full AIME results in Table~\ref{tab:full_aime}. The conclusions on all AIME benchmarks are consistent with Table~\ref{tab:main}. Entrocraft outperforms all other entropy-preserving methods across diverse RL algorithms.

\begin{table}[h!]
\caption{Full Evaluations on AIME. The proposed Entrocraft consistently improves the final performance of RL algorithms, better than other entropy-preserving methods. The best scores are in \textbf{bold}, and the second-best scores are marked with \underline{underlines}.}
\label{tab:full_aime}
\centering
\begin{tabular}{l|cc|cc|cc}
\toprule
\multicolumn{1}{c|}{\multirow{2}{*}{\textbf{Method}}} & \multicolumn{2}{c|}{\textbf{AIME-24}} & \multicolumn{2}{c|}{\textbf{AIME-25}} & \multicolumn{2}{c}{\textbf{AIME-26}} \\
\multicolumn{1}{c|}{}                                 & mean@32           & pass@32           & mean@32           & pass@32           & mean@32           & pass@32          \\ \midrule
GRPO                                                  & 7.7               & \underline{30.0}  & 8.9               & 40.0              & 8.1               & 33.3             \\
~~~~ + Entropy Loss                                   & 9.0               & 26.7              & 9.9               & 33.3              & 9.1               & 26.7             \\
~~~~ + Clip-Higher                                    & \underline{10.2}  & 23.3              & 10.8              & 40.0              & 9.1               & \underline{36.7} \\
~~~~ + Clip-Cov                                       & 9.7               & 26.7              & 10.8              & 33.3              & 10.9              & \underline{36.7} \\
\rowcolor{green!15!white}~~~~ + Entrocraft            & \textbf{16.7}     & \textbf{36.7}     & \textbf{16.5}     & \textbf{46.7}     & \textbf{15.2}     & \textbf{40.0}    \\ \midrule
GSPO                                                  & 8.8               & 26.7              & 8.9               & 33.3              & 8.8               & 33.3             \\
~~~~ + Entropy Loss                                   & 7.9               & 23.3              & 11.9              & 33.3              & 9.6               & \underline{36.7} \\
~~~~ + Clip-Higher                                    & 10.0              & 20.0              & 10.0              & 40.0              & 9.5               & \underline{36.7} \\
~~~~ + Clip-Cov                                       & 9.7               & 26.7              & 10.0              & 40.0              & 9.1               & \underline{36.7} \\
\rowcolor{green!15!white}~~~~ + Entrocraft            & \underline{10.2}  & 26.7              & \underline{14.7}  & 40.0              & \underline{13.3}  & \underline{36.7} \\ \midrule
W-Reinforce                                           & 8.8               & 26.7              & 11.3              & 40.0              & 9.3               & 26.7             \\
EntroPIC                                              & 9.9               & 20.0              & 10.9              & \underline{43.3}  & 9.7               & 30.0             \\ \bottomrule
\end{tabular}
\end{table}

\subsection{Results on Other Models}
Aside from \texttt{Qwen3-4B-Base}, we also try Entrocraft on larger models and models from different model families, as shown in Table~\ref{tab:llama8}, \ref{tab:qwen8}, and \ref{tab:qwen14}. The results report consistent improvement over GRPO across different base models.

\begin{table}[h!]\setlength\tabcolsep{1.9pt}
\caption{Benchmark results on \texttt{Llama-3.1-8B-Instruct}. Entrocraft demonstrates consistent improvements over GRPO. The best scores are in \textbf{bold}.}
\label{tab:llama8}
\centering
\begin{tabular}{@{}l|cc|cc|cc|cc@{}}
\toprule
\multicolumn{1}{c|}{\multirow{2}{*}{\textbf{Method}}} & \multicolumn{2}{c|}{\textbf{MATH-500}} & \multicolumn{2}{c|}{\textbf{AMC-23}} & \multicolumn{2}{c|}{\textbf{AIME-25}} & \multicolumn{2}{c}{\textbf{Avg. Score}} \\
\multicolumn{1}{c|}{}                                 & mean@32            & pass@32           & mean@32           & pass@32          & mean@32           & pass@32           & mean@32            & pass@32            \\ \midrule
GRPO                                                  & 56.2               & 82.2              & 30.8              & 75.0             & 10.4              & 16.7              & 32.5               & 58.0               \\
\rowcolor{green!15!white}~~ + Entrocraft              & \textbf{57.1}      & \textbf{85.2}     & \textbf{36.8}     & \textbf{85.0}    & \textbf{18.8}     & \textbf{26.7}     & \textbf{37.6}      & \textbf{65.6}      \\ \bottomrule
\end{tabular}
\end{table}

\begin{table}[h!]\setlength\tabcolsep{1.9pt}
\caption{Benchmark results on \texttt{Qwen3-8B-Base}. Entrocraft demonstrates consistent improvements over GRPO. The best scores are in \textbf{bold}.}
\label{tab:qwen8}
\centering
\begin{tabular}{@{}l|cc|cc|cc|cc@{}}
\toprule
\multicolumn{1}{c|}{\multirow{2}{*}{\textbf{Method}}} & \multicolumn{2}{c|}{\textbf{MATH-500}} & \multicolumn{2}{c|}{\textbf{AMC-23}} & \multicolumn{2}{c|}{\textbf{AIME-25}} & \multicolumn{2}{c}{\textbf{Avg. Score}} \\
\multicolumn{1}{c|}{}                                 & mean@32            & pass@32           & mean@32           & pass@32          & mean@32           & pass@32           & mean@32            & pass@32            \\ \midrule
GRPO                                                  & 77.2               & 91.8              & 59.7              & 92.5             & 13.1              & 43.3              & 50.0               & 75.9               \\
\rowcolor{green!15!white}~~~~ + Entrocraft            & \textbf{78.4}      & \textbf{94.0}     & \textbf{63.5}     & \textbf{97.5}    & \textbf{13.8}     & \textbf{46.7}     & \textbf{51.9}      & \textbf{79.4}      \\ \bottomrule
\end{tabular}
\end{table}

\begin{table}[h!]\setlength\tabcolsep{1.9pt}
\caption{Benchmark results on \texttt{Qwen3-14B-Base}. Entrocraft demonstrates consistent improvements over GRPO. The best scores are in \textbf{bold}.}
\label{tab:qwen14}
\centering
\begin{tabular}{@{}l|cc|cc|cc|cc@{}}
\toprule
\multicolumn{1}{c|}{\multirow{2}{*}{\textbf{Method}}} & \multicolumn{2}{c|}{\textbf{MATH-500}} & \multicolumn{2}{c|}{\textbf{AMC-23}} & \multicolumn{2}{c|}{\textbf{AIME-25}} & \multicolumn{2}{c}{\textbf{Avg. Score}} \\
\multicolumn{1}{c|}{}                                 & mean@32            & pass@32           & mean@32           & pass@32          & mean@32           & pass@32           & mean@32            & pass@32            \\ \midrule
GRPO                                                  & 80.8               & 92.2              & 66.7              & 97.5             & 14.7              & 40.0              & 54.1               & 76.6               \\
\rowcolor{green!15!white}~~~~ + Entrocraft            & \textbf{81.7}      & \textbf{93.8}     & \textbf{67.1}     & \textbf{97.5}    & \textbf{17.0}     & \textbf{53.3}     & \textbf{55.3}      & \textbf{81.5}      \\ \bottomrule
\end{tabular}
\end{table}

\subsection{Failure Case: Maintaining High Entropy May Induce Instability in The Long Term}\label{abl_study:failure_case}
To discuss the boundary of entropy control methods, we use Entrocraft with different entropy targets and record their training dynamics in long-term RL. We show two distinct examples in Fig.~\ref{fig:failure}. We find that an overly high entropy level introduces instability into RL training, and such instability will accumulate and become more catastrophic in the long term. This also validates our design of entropy curve annealing schemes.

\begin{figure}[h!]
    \centering
    \subfloat{\includegraphics[width=.33\columnwidth]{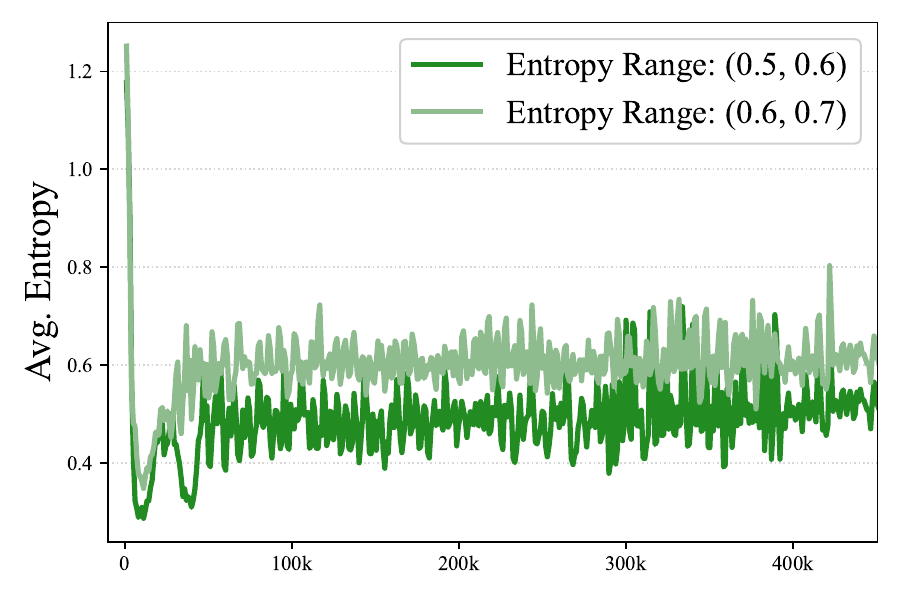}}
    \subfloat{\includegraphics[width=.33\columnwidth]{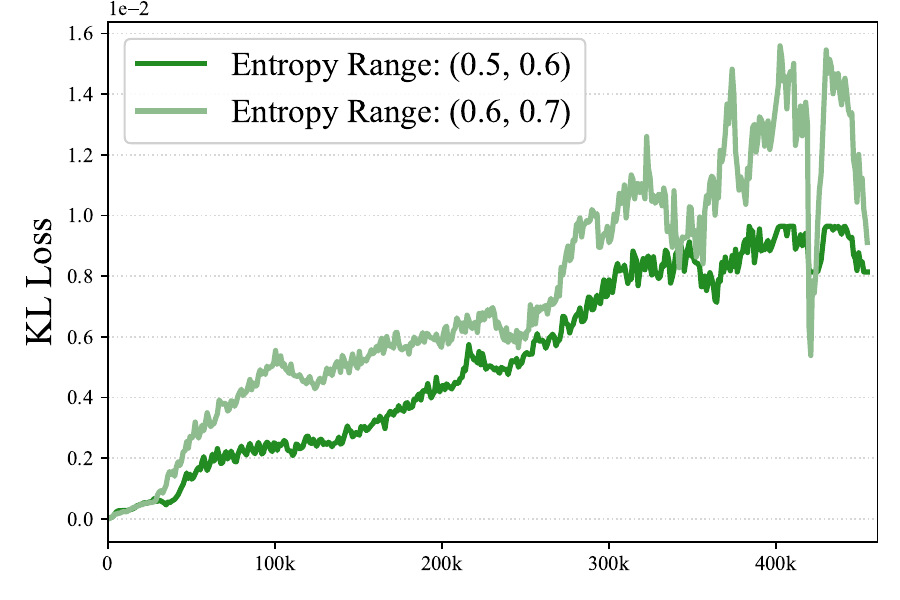}}
    \subfloat{\includegraphics[width=.33\columnwidth]{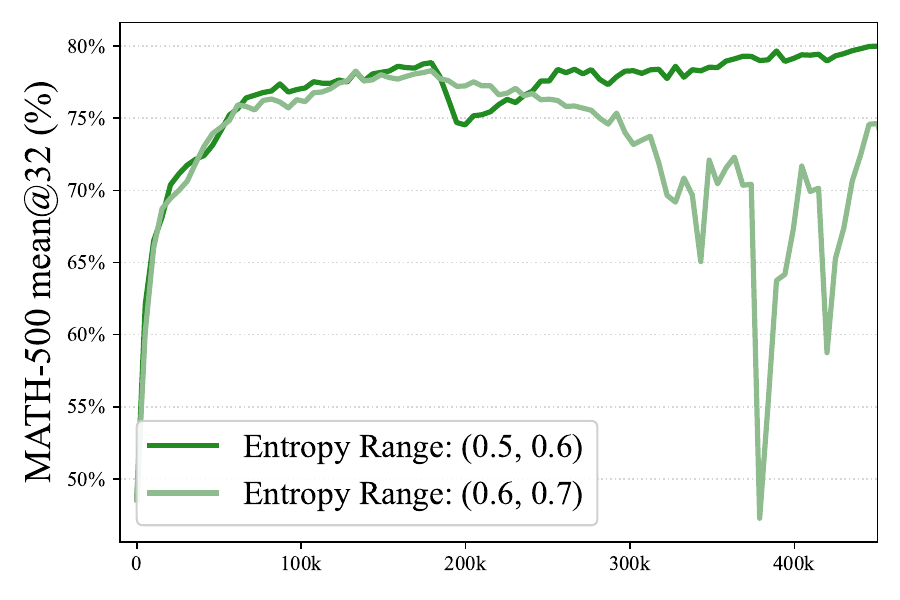}}
    \caption{Failure case study. Overly high entropy level introduces instability into RL training, making the empirical performance fragile to subtle perturbations.}
    \label{fig:failure}
\end{figure}






\end{document}